\title{Estimating informativeness of samples with Smooth Unique Information}
\author{Hrayr Harutyunyan$^{1,2}$\thanks{Work conducted at Amazon Web Services.}\ , Alessandro Achille$^1$, Giovanni Paolini$^1$, Orchid Majumder$^1$,\\ \textbf{Avinash Ravichandran$^1$, Rahul Bhotika$^1$, Stefano Soatto$^1$}\\
$^1$ Amazon Web Services, $^2$ USC Information Sciences Institute\\
\texttt{hrayrhar@usc.edu, \{aachille, paoling, orchid\}@amazon.com},\\
\texttt{\{ravinash, bhotikar, soattos\}@amazon.com}\\
}
\newcolumntype{M}[1]{>{\centering\arraybackslash}m{#1}}
\newcolumntype{L}[1]{>{\raggedright\arraybackslash}m{#1}}
\newtheorem{theorem}{Theorem}
\numberwithin{theorem}{section}
\newtheorem{prop}[theorem]{Proposition}
\newtheorem{lemma}[theorem]{Lemma}
\newtheorem{definition}[theorem]{Definition}
\newcommand{\KL}[2]{\operatorname{KL}(#1\,\|\,#2)}
\newcommand{\Loss}{\mathcal{L}}
\newcommand{\rebuttal}[1]{#1}
\newcommand\numberthis{\addtocounter{equation}{1}\tag{\theequation}}  
\newcommand{\E}{\mathbb{E}}
\DeclareMathOperator{\USI}{SI}
\DeclareMathOperator{\FSI}{F-SI}
\begin{document}

\maketitle

\vspace{-1em}

\begin{abstract}
We define a notion of information that an individual sample provides to the training of a neural network, and we specialize it to measure both how much a sample informs the final weights and how much it informs the function computed by the weights.
Though related, we show that these quantities have a  qualitatively different behavior.
We give efficient approximations of these quantities using a linearized network and demonstrate empirically that the approximation is accurate for real-world architectures, such as pre-trained ResNets.
We apply these measures to several problems, such as dataset summarization, analysis of under-sampled classes, comparison of informativeness of different data sources, and detection of \rebuttal{adversarial and} corrupted examples.
Our work generalizes existing frameworks but enjoys better computational properties for heavily over-parametrized models, which makes it possible to apply it to real-world networks.

\end{abstract}

\section{Introduction}

Training a deep neural network (DNN) entails extracting information from samples in a dataset and storing it in the weights of the network, so that it may be used in future inference or prediction. But how much information does a \textit{particular} sample contribute to the trained model? The answer can be used to provide strong generalization bounds (if no information is used, the network is not memorizing the sample), privacy bounds (how much information the network can leak about a particular sample), and enable better interpretation of the training process and its outcome.
To determine the information content of samples, we need to define and compute information. In the classical sense, information is a property of random variables, which may be degenerate for the deterministic process of computing the output of a trained DNN in response to a given input (inference). So, even posing the problem presents some technical challenges. But beyond technicalities, how can we know whether a given sample is memorized by the network and, if it is, whether it is used for inference?

We propose a notion of {\em unique sample information} that, while rooted in information theory, captures some aspects of stability theory and influence functions.
Unlike most information-theoretic measures, ours can be approximated efficiently for large networks, especially in the case of transfer learning, which encompasses many real-world applications of deep learning. Our definition can be applied to either ``weight space'' or ``function space.'' This allows us to study the non-trivial difference between information the weights possess (weight space) and the information the network actually {\em uses} to make predictions on new samples (function space).
 
Our method yields a valid notion of information without relying on the randomness of the training algorithm ({\em e.g.}, stochastic gradient descent, SGD), and works even for deterministic training algorithms. Our main work-horse is a first-order 
approximation of the network.
This approximation is accurate when the network is pre-trained \citep{mu2020gradients} --- as is common in practical applications --- or is randomly initialized but very wide \citep{lee2019wide}, and can be used to obtain a closed-form  expression of the per-sample information.
In addition, our method has better scaling with respect to the number of parameters than most other information measures, which makes it applicable to massively over-parametrized models such as DNNs.
Our information measure can be computed without actually training the network, making it amenable to use in problems like dataset summarization.

We apply our method to remove a large portion of uninformative examples from a training set with minimum impact on the accuracy of the resulting model (dataset summarization).
We also apply our method to detect mislabeled samples, which we show carry more unique information.

To summarize, our contributions are
    (1) We introduce a notion of unique information that a sample contributes to the training of a DNN, both in weight space and in function space, and relate it with the stability of the training algorithm;
    (2) We provide an efficient method to compute unique information even for large networks using a linear approximation of the DNN, and without having to train a network;
    (3) We show applications to dataset summarization and analysis.
The implementation of the proposed method and the code for reproducing the experiments is available at \url{https://github.com/awslabs/aws-cv-unique-information}.

\textbf{Prerequisites and Notation.}
Consider a dataset of $n$ labeled examples $S=\{z_i\}_{i=1}^n$, where $z_i = (x_i, y_i)$, $x_i \in \mathcal{X}$ and $y_i \in \mathbb{R}^k$ and a neural network model $f_w : \mathcal{X} \mapsto \mathbb{R}^k$ with parameters $w \in \mathbb{R}^d$.
Throughout the paper $S_{-i}=\{z_1, \ldots, z_{i-1}, z_{i+1}, \ldots, z_n\}$ denotes the set excluding the $i$-th sample; $f_{w_t}$ is often shortened to $f_t$; the concatenation of all training examples is denoted by $X$; the concatenation of all training labels by $Y \in \mathbb{R}^{nk}$; and the concatenation of all outputs by $f_w(X) \in \mathbb{R}^{nk}$.
The loss on the $i$-th example is denoted by $\Loss_i(w)$ and is equal to $\frac{1}{2}\lVert f_w(x_i) - y_i\rVert_2^2$, unless specified otherwise. This choice is useful when dealing with linearized models and is justified by \citet{hui2020evaluation}, who showed that the mean-squared error (MSE) loss is as effective as cross-entropy for classification tasks.
The total loss is $\Loss(w) = \sum_{i=1}^n \Loss_i(w) + \frac{\lambda}{2} \lVert w - w_0\rVert_2^2$, where $\lambda \ge 0$ is a weight decay regularization coefficient and $w_0$ is the weight initialization point.
Note that the regularization term differs from standard weight decay $\lVert w\rVert_2^2$ and is more appropriate for linearized neural networks, as it allows us to derive the dynamics analytically (see Sec.~\ref{sec:wd-linearized} of the appendix).
\rebuttal{
Finally, a (possibly stochastic) training algorithm is denoted with a mapping $A: \mathcal{S} \rightarrow {\cal W}$, which maps a training dataset $S$ to classifier weights $W=A(S)$. Since the training algorithm can be stochastic, $W$ is a random variable.
The distribution of possible weights $W$ after training with the algorithm $A$ on the dataset $S$ is denoted with $p_A(w \mid S)$. 
We use several information-theoretic quantities, such as entropy: $H(X) = -\mathbb{E}\big[\log p(x)\big]$, mutual information: $I(X ; Y) = H(X) + H(Y) - H(X,Y)$, Kullback–Leibler divergence: $\text{KL}(p(x) || q(x)) = \mathbb{E}_{x\sim p(x)}\left[\log(p(x)/q(x))\right]$ and their conditional variants~\citep{cover}.
If $y\in\mathbb{R}^m$ and $x\in\mathbb{R}^n$, then the Jacobian $\frac{\partial y}{\partial x}$ is an $m \times n$ matrix.
The gradient  $\nabla_x y$ denotes transpose of the Jacobian  $\frac{\partial y}{\partial x}$, an $n\times m$ matrix.
}

\section{Related Work}
Our work is related to information-theoretic stability notions~\citep{bassily2016algorithmic, raginsky2016information, feldman2018calibrating} that seek to measure the influence of a sample on the output, and to measure generalization.
\citet{raginsky2016information} define information stability as $\mathbb{E}_S\left[ \frac{1}{n}\sum_{i=1}^n I(\rebuttal{W}; \rebuttal{Z_i} \mid S_{-i})\right]$, 
the expected average amount of unique (Shannon) information that weights have about an example.
This, without the expectation over $S$, is also our starting point (eq.~\ref{eq:unique-info}).
\citet{bassily2016algorithmic} define KL-stability $\sup_{S,S'}\KL{\rebuttal{p_A}(w \mid S)}{\rebuttal{p_A}(w \mid S')}$, where $S$ and $S'$ are datasets that differ by one example, while \citet{feldman2018calibrating} define average leave-one-out KL stability as $\sup_S \frac{1}{n}\sum_{i=1}^n \KL{\rebuttal{p_A}(w \mid S)}{\rebuttal{p_A}(w \mid S_{-i})}$.
The latter closely resembles our definition (eq.~\ref{eq:si-definition}).
Unfortunately,
while the weights are continuous, the optimization algorithm (such as SGD) is usually discrete. This generally makes the resulting quantities degenerate (infinite).
Most works address this issue by replacing the discrete optimization algorithm with a continuous one, such as stochastic gradient Langevin dynamics \citep{welling2011bayesian} or continuous stochastic differential equations that approximate SGD \citep{li2017stochastic} in the limit.
We aim to avoid such assumptions and give a definition that is directly applicable to real networks trained with standard algorithms. To do this, we apply a smoothing procedure to a standard discrete algorithm. The final result can still be interpreted as a valid bound on Shannon mutual information, but for a slightly modified optimization algorithm. 
Our definitions relate informativeness of a sample to the notion of algorithmic stability~\citep{bousquet2002stability,hardt2015train}, where a training algorithm $A$ is called stable if $A(S)$ is close to $A(S')$ when the datasets $S$ and $S'$ differ by only one sample.

To ensure our quantities are well-defined, we apply a smoothing technique which is reminiscent of a soft discretization of weight space. 
In \Cref{sec:functional-sample-information}, we show that a canonical discretization is obtained using the Fisher information matrix, which relates to classical results of \cite{rissanen1996fisher} on optimal coding length.  It also relates to the use of a post-distribution in \cite{achille2019information}, who however use it to estimate the total amount of information in the weights of a network.

We use a first-order approximation (linearization) inspired by the Neural Tangent Kernel (NTK)~\citep{jacot2018ntk,lee2019wide} to efficiently estimate informativeness of a sample.
While NTK predicts that, in the limit of an infinitely wide network, the linearized model is an accurate approximation, we do not observe this on more realistic architectures and datasets. 
However, we show that, when using pre-trained networks as common in practice, linearization yields an accurate approximation, similarly to what is observed by \cite{mu2020gradients}.
\cite{ziv2020information} study the total information contained by an ensemble of randomly initialized linearized networks. They notice that, while considering ensembles makes the mutual information finite, it still diverges to infinity as training time goes to infinity. On the other hand, we consider the unique information about a single example, without the neeed for ensembles, by considering smoothed information, which remains bounded for any time.
Other complementary works study how information about an input sample propagates through the network~\citep{shwartzziv2017opening, achille2018emergence, saxe2019information} or total amount of information (complexity) of a classification dataset~\citep{lorena2019complex}, rather than how much information the sample itself contains. 

In terms of applications, our work is related to works that estimate influence of an example~\citep{koh2017understanding, toneva2018an, katharopoulos-fleuret-2018, pmlr-v97-ghorbani19c, dvlr20}.
This can be done by estimating the change in weights if a sample is removed from the training set,
which is addressed by several works~\citep{koh2017understanding, golatkar2020forgetting, wu2020deltagrad}.
Influence functions \citep{cook1977detection, koh2017understanding} model removal of a sample as reducing its weight infinitesimally in the loss function, and show an efficient first-order approximation of its effect on other measures (such as test time predictions).
We found influence functions to be prohibitively slow for the networks and data regimes we consider.
\citet{basu2020influence} found that influence functions are not accurate for large DNNs.
Additionally, influence functions assume that the training has converged, which is not usually the case in practice. 
We instead use linearization of neural networks to estimate the effect of removing an example efficiently.
We find that this approximation is accurate in realistic settings, and that the computational cost scales better with network size, making it applicable to very large neural networks.

\rebuttal{
Our work is orthogonal to that of feature selection: while we aim to evaluate the informativeness for the final weights of a subset of training samples, feature selection aims to quantify the informativeness for the task variable of a subset of features. However, they share some high-level similarities. In particular, \cite{kohavi1997wrappers} propose the notion of strongly-relevant feature as one that changes the discriminative distribution when it is excluded. This notion is similar to the notion of unique sample information in \cref{eq:unique-info}.
}

\section{Unique information of a sample in the weights}
\label{sec:smoothed-sample-information}
Consider a (possibly stochastic) training algorithm $A$ that, given a training dataset $S$, returns the \rebuttal{weights $W$} of a classifier $f_w$.
From an information-theoretic point of view, the amount of unique information a sample $z_i = (x_i, y_i)$ provides about the weights is given by the conditional point-wise mutual information:
\begin{equation}
I(\rebuttal{W}; Z_i=z_i \mid \mathbf{S}_{-i}=S_{-i}) = \KL{\rebuttal{p_A}(w \mid \mathbf{S}=S)}{r(w \mid \mathbf{S}_{-i}=S_{-i})},
\label{eq:unique-info}
\end{equation}
where $\mathbf{S}$ denotes the random variable whose sample is the particular dataset $S$, and ${r(w \mid \mathbf{S}_{-i}=S_{-i}) = \rebuttal{ \int p_A(w \mid \mathbf{S}=S_{-i}, z'_i) dP(z'_i)}= \E_{z'_i \sim p(z)}[\rebuttal{p_A}(w \mid \mathbf{S}=S_{-i}, z'_i)]}$ denotes the marginal distribution of the weights over all possible sampling of $Z_i$.\footnote{Hereafter, to avoid notational clutter, we will shorten ``$\mathbf{S}_{-i}=S_{-i}$'' to $S_{-i}$ and ``$Z_i = z_i$'' to just $z_i$ in all conditionals and information-theoretic functionals.}
Computing the distribution $r(w \mid S_{-i})$ is challenging because of the high-dimensionality and the cost of training algorithm $A$ for multiple samples.
One can address this problem by using the following upper bound \rebuttal{(\Cref{lemma:kl-divergence-bound})}:
\begin{align*}
    \KL{\rebuttal{p_A}(w \mid S)}{r(w \mid S_{-i})} &= \KL{\rebuttal{p_A}(w \mid S)}{q(w \mid S_{-i})} - \KL{r(w \mid S_{-i})}{q(w \mid S_{-i})}\\
    &\le \KL{\rebuttal{p_A}(w \mid S)}{q(w \mid S_{-i})}\label{eq:unique-info-KL-stability}\numberthis,
\end{align*}
which is valid for any distribution  $q(w \mid S_{-i})$.
Choosing $q(w \mid S_{-i}) = \rebuttal{p_A}(w \mid S_{-i})$, the distribution of the weights after training on $S_{-i}$, gives a reasonable upper bound (see Sec.~\ref{sec:appendix-approximating-sample-info-with-kl} for details):
\begin{equation}
I(\rebuttal{W}; z_i \mid S_{-i}) \le  \KL{\rebuttal{p_A}(w \mid S)}{\rebuttal{p_A}(w \mid S_{-i})}.
\label{eq:si-truly-continuous-algorithm}
\end{equation}
We call $\text{SI}(z_i, A) \triangleq \KL{\rebuttal{p_A}(w \mid S)}{\rebuttal{p_A}(w \mid S_{-i})}$ the \emph{sample information} of $z_i$ w.r.t. algorithm $A$.

\textbf{Smoothed Sample Information.}
The formulation above is valid in theory but, in practice, even SGD is used in a deterministic fashion by fixing the random seed and, in the end, we obtain just one set of weights rather than a distribution of them. Under these circumstances, all the above KL divergences are degenerate, as they evaluate to infinity. 
It is common to address the problem by assuming that $A$ is a continuous stochastic optimization algorithm, such as stochastic gradient Langevin dynamics (SGLD) or a continuous approximation of SGD which adds Gaussian noise to the gradients.
However, this creates a disconnect with the practice, where such approaches do not perform at the state-of-the-art.
Our definitions below aim to overcome this disconnect.

\begin{definition}[Smooth sample information]
Let $A$ be a possibly stochastic algorithm.
Following eq.~(\ref{eq:si-truly-continuous-algorithm}), we define the \emph{smooth sample information} with smoothing $\Sigma$:
\begin{equation}
\boxed{\USI_\Sigma(z_i, A) = \KL{\rebuttal{p_{A_\Sigma}}(w \mid S)}{\rebuttal{p_{A_\Sigma}}(w \mid S_{-i})}.}
\label{eq:si-definition}
\end{equation}
where we define smoothed weights $A_\Sigma(S) \triangleq A(S) + \xi$, with $\xi \sim \mathcal{N}(0, \Sigma)$.
\end{definition}
Note that if the algorithm $A$ is continuous, we can pick $\Sigma \rightarrow 0$, which will make $\USI_\Sigma(z_i, A) \rightarrow \USI(z_i, A)$.
The following proposition shows how to compute the value of $\USI_\Sigma$ in practice.
\begin{prop}\label{prop:smooth-SI-deterministic}
Let $A$ be a deterministic training algorithm. Then, we have:
\begin{align}
\USI_\Sigma(z_i, A) &= \frac{1}{2} (w - w_{-i})^T \Sigma^{-1} (w - w_{-i}),
\label{eq:si-stability-approximation}
\end{align}
where $w=A(S)$ and $w_{-i} = A(S_{-i})$ are the weights obtained by training respectively with and without the training sample $z_i$.
That is, the value of $\USI_\Sigma(z_i)$ depends on the distance between the solutions obtained training with and without the sample $z_i$, rescaled by $\Sigma$.
\end{prop}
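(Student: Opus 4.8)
The plan is to reduce the definition in eq.~(\ref{eq:si-definition}) to a Kullback--Leibler divergence between two Gaussians that share the same covariance, and then invoke the standard closed-form expression for such a divergence. The first step is to identify the two smoothed distributions explicitly. Since $A$ is deterministic, $A(S)=w$ and $A(S_{-i})=w_{-i}$ are fixed points rather than random variables. By the definition $A_\Sigma(S)\triangleq A(S)+\xi$ with $\xi\sim\mathcal{N}(0,\Sigma)$, the push-forward of the noise through a constant map is again Gaussian, so
\begin{equation*}
p_{A_\Sigma}(w \mid S)=\mathcal{N}(w;\, A(S),\,\Sigma)=\mathcal{N}(w;\, w,\,\Sigma), \qquad p_{A_\Sigma}(w \mid S_{-i})=\mathcal{N}(w;\, w_{-i},\,\Sigma).
\end{equation*}
In other words, both conditional laws are Gaussians with a common covariance $\Sigma$ and with means $w$ and $w_{-i}$ respectively.

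Substituting these two Gaussians into eq.~(\ref{eq:si-definition}) gives $\USI_\Sigma(z_i,A)=\KL{\mathcal{N}(w,\Sigma)}{\mathcal{N}(w_{-i},\Sigma)}$. I would then recall the general formula for the KL divergence between two $d$-dimensional Gaussians $\mathcal{N}(\mu_0,\Sigma_0)$ and $\mathcal{N}(\mu_1,\Sigma_1)$,
\begin{equation*}
\tfrac{1}{2}\Big[\tr(\Sigma_1^{-1}\Sigma_0) + (\mu_1-\mu_0)^T\Sigma_1^{-1}(\mu_1-\mu_0) - d + \ln\tfrac{\det\Sigma_1}{\det\Sigma_0}\Big],
\end{equation*}
and specialize it to the case $\Sigma_0=\Sigma_1=\Sigma$, $\mu_0=w$, $\mu_1=w_{-i}$. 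With equal covariances the trace term becomes $\tr(\Sigma^{-1}\Sigma)=\tr(I)=d$, which cancels the $-d$ term, and the log-determinant ratio is $\ln 1 = 0$. What survives is precisely the Mahalanobis term $\tfrac{1}{2}(w-w_{-i})^T\Sigma^{-1}(w-w_{-i})$, matching eq.~(\ref{eq:si-stability-approximation}).

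This argument is essentially a direct substitution, so I do not expect a genuine obstacle; the only two points that warrant care are (i) verifying that the \emph{determinism} of $A$ is what collapses the smoothed distribution into an exact Gaussian, since for a stochastic $A$ one would instead get the convolution of the algorithm's output law with $\mathcal{N}(0,\Sigma)$ and the formula would no longer hold, and (ii) confirming the cancellation of the trace and log-determinant terms, which relies on both smoothed distributions using the \emph{same} smoothing covariance $\Sigma$. Both points follow immediately from the setup, so the proof is short; I would simply state the two Gaussian identifications, cite the standard Gaussian KL formula, and carry out the two cancellations.
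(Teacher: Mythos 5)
Your proof is correct and follows the same route as the paper, which simply notes that the result is the KL divergence between two Gaussians with equal covariance $\Sigma$ and means $w$ and $w_{-i}$, namely $\frac{1}{2}(w-w_{-i})^T\Sigma^{-1}(w-w_{-i})$. Your write-up merely spells out the cancellation of the trace and log-determinant terms that the paper leaves implicit.
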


The smoothing of the weights by a matrix $\Sigma$ can be seen as a form of soft-discretization.
Rather than simply using an isotropic discretization $\Sigma=\sigma^2I$ -- since different filters have different norms and/or importance for the final output of the network -- it makes sense to discretize them differently.
In Sections~\ref{sec:functional-sample-information} and \ref{sec:linear-networks} we show two canonical choices for $\Sigma$.
One is the inverse of the Fisher information matrix, which discounts weights not used for classification, and the other is the covariance of the steady-state distribution of SGD, which respects the level of SGD noise and flatness of the loss.

\section{Unique Information in the Predictions}
\label{sec:functional-sample-information}

$\USI_\Sigma(z_i, A)$ measures how much information an example $z_i$ provides to the weights.
Alternatively, instead of working in weight-space, we can approach the problem in function-space, and measure the informativeness of a training example for the network outputs or activations.
The unique information that $z_i$ provides to the predictions on a test example $x$ is:
\begin{equation*}
I(z_i; \widehat{y} \mid x, S_{-i}) = \E_S \KL{q(\widehat{y}\mid x, S)}{r(\widehat{y}\mid x, S_{-i})},
\end{equation*}
where $x \sim p(x)$ is a previously unseen test sample, $\widehat{y} \sim q(\cdot \mid x, S)$ is the network output on input $x$ after training on $S$, and $r(\widehat{y} \mid x, S_{-i})  =  \rebuttal{ \int q(\widehat{y} \mid x, S_{-i}, z_i') dP(z'_i) } = \E_{z_i'} q(\widehat{y} \mid x, S_{-i}, z_i')$.
Following the reasoning in the previous section, we arrive at
\begin{equation}
I(z_i; \widehat{y} \mid x, S_{-i}) \le \KL{q(\widehat{y} \mid x, S)}{q(\widehat{y} \mid x, S_{-i})}.
\label{eq:fsi-truly-continuous}
\end{equation}
Again, when training with a discrete algorithm and/or when the output of the network is deterministic, the above quantity may be infinite.
Similar to smooth sample information, we define:
\begin{definition}[Smooth functional sample information]
Let $A$ be a possibly stochastic training algorithm and let $\widehat{y}$ be the prediction on a test example $x$ after training on $S$. We define the \emph{smooth functional sample information} ($\FSI$) as:
\begin{equation}
    \boxed{\FSI_\sigma(z_i, A) =  \E_{(x,y)}[\KL{q_\sigma(\widehat{y}_\sigma \mid x, S)}{q_\sigma(\widehat{y}_\sigma \mid x, S_{-i})}]},
\end{equation}
where $\widehat{y}_\sigma = \widehat{y}(x, S) + n$, with $n \sim \mathcal{N}(0, \sigma^2 I)$ and $q_\sigma(\widehat{y}_\sigma \mid x, S)$ being the distribution of $\widehat{y}_\sigma$.
\end{definition}

We now describe a first-order approximation of the value of $\FSI_\Sigma$ for deterministic algorithms.
\begin{prop}\label{prop:smooth-functional-SI-deterministic}
Let $A$ be a deterministic algorithm, $w = A(S)$ and $w_{-i} = A(S_{-i})$ be the weights obtained training respectively with and without sample $z_i$. Then,
\begin{align}
\FSI_\sigma(z_i, A) &= \frac{1}{2\sigma^2}  \mathbb{E}_{x\sim p(x)}\lVert f_w (x) - f_{w_{-i}}(x) \rVert_2^2\label{eq:fsi-difference-of-predictions}\\
&\approx \frac{1}{2\sigma^2} (w-w_{-i})^T F(w) (w-w_{-i}),
\label{eq:fsi-fisher-approx}
\end{align}
with $F(w) = \E_{x} \left[ \nabla_w f_w(x) \nabla_w f_w(x)^T\right]$ being the Fisher information matrix of $q_{\sigma=1}(\widehat{y} \mid x, S)$.
\end{prop}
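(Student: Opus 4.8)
The plan is to handle the two claims separately: the exact identity \eqref{eq:fsi-difference-of-predictions} follows from a closed-form Gaussian KL computation, while the approximation \eqref{eq:fsi-fisher-approx} follows from a first-order Taylor expansion of the network around $w$ together with a direct identification of the Fisher matrix.

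First I would exploit determinism of $A$. Since $w=A(S)$ and $w_{-i}=A(S_{-i})$ are fixed points, the predictions $\widehat{y}(x,S)=f_w(x)$ and $\widehat{y}(x,S_{-i})=f_{w_{-i}}(x)$ are deterministic, so the two smoothed output distributions are the Gaussians $q_\sigma(\widehat{y}_\sigma\mid x,S)=\mathcal{N}(f_w(x),\sigma^2 I)$ and $q_\sigma(\widehat{y}_\sigma\mid x,S_{-i})=\mathcal{N}(f_{w_{-i}}(x),\sigma^2 I)$. They share the covariance $\sigma^2 I$, so their KL divergence reduces to the standard mean-only formula
\[
\KL{\mathcal{N}(\mu_1,\sigma^2 I)}{\mathcal{N}(\mu_2,\sigma^2 I)}=\frac{1}{2\sigma^2}\lVert\mu_1-\mu_2\rVert_2^2 .
\]
Substituting $\mu_1=f_w(x)$ and $\mu_2=f_{w_{-i}}(x)$ and noting that the integrand does not depend on $y$, the outer expectation over $(x,y)$ collapses to one over $x\sim p(x)$, which gives \eqref{eq:fsi-difference-of-predictions} exactly.

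For the approximation I would expand $f_{w_{-i}}(x)$ to first order around the reference weights $w$, using the paper's convention that $\nabla_w f_w(x)$ is the transposed Jacobian, so that $f_w(x)-f_{w_{-i}}(x)\approx \nabla_w f_w(x)^T(w-w_{-i})$. Squaring and taking the expectation over $x$ yields the quadratic form $\E_x\lVert f_w(x)-f_{w_{-i}}(x)\rVert_2^2\approx (w-w_{-i})^T\,\E_x[\nabla_w f_w(x)\nabla_w f_w(x)^T]\,(w-w_{-i})$, which is \eqref{eq:fsi-fisher-approx}. To close the argument I would verify that this middle matrix is the Fisher information of the Gaussian output model $q_{\sigma=1}(\widehat{y}\mid x,S)=\mathcal{N}(f_w(x),I)$: the score is $\nabla_w\log q=\nabla_w f_w(x)\,(\widehat{y}-f_w(x))$, and since $\E[(\widehat{y}-f_w(x))(\widehat{y}-f_w(x))^T]=I$ at $\sigma=1$, the outer product of scores collapses to $F(w)=\E_x[\nabla_w f_w(x)\nabla_w f_w(x)^T]$, as claimed.

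The only nontrivial step is the first-order expansion, which is precisely where the $\approx$ enters: it requires $w_{-i}$ to be close enough to $w$ for the linearization to be accurate, so I expect this to be the point to flag rather than to grind through. It is exactly the regime the paper argues for elsewhere (pre-trained, or very wide / linearized networks): in the linearized NTK model $f_w(x)$ is affine in $w$, the Taylor remainder vanishes identically, and the approximation \eqref{eq:fsi-fisher-approx} becomes an exact equality.
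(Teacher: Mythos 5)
Your proposal is correct and matches the paper's approach: the paper's own proof consists of the single observation that the result is the KL divergence between two Gaussians with equal covariance and means $f_w(x)$ and $f_{w_{-i}}(x)$, which is exactly your first step. Your additional work for \eqref{eq:fsi-fisher-approx} — the first-order Taylor expansion in $w$ and the verification that $\E_x[\nabla_w f_w(x)\nabla_w f_w(x)^T]$ is the Fisher matrix of $q_{\sigma=1}$ — correctly fills in details the paper leaves implicit (it only remarks later, in Section 5, that this step becomes exact for linearized networks, consistent with your closing observation).
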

By comparing \cref{eq:si-stability-approximation} and \cref{eq:fsi-fisher-approx}, we see that the functional sample information is approximated by using the inverse of the Fisher information matrix to smooth the weight space. However, this smoothing is not isotropic as it depends on the point $w$.

\section{Exact Solution for Linearized Networks}
\label{sec:linear-networks}
In this section, we derive a close-form expression for $\USI_\Sigma$ and $\FSI_\Sigma$ using a linear approximation of the network around the initial weights. We show that this approximation can be computed efficiently and, as we validate empirically in Sec.~\ref{sec:experiments}, correlates well with the actual informativeness values. We also show that the covariance matrix of SGD's steady-state distribution is a canonical choice for the smoothing matrix $\Sigma$ of $\USI_\Sigma$.

\textbf{Linearized Network.}
Linearized neural networks are a class of neural networks obtained by taking the first-order Taylor expansion of a DNN around the initial weights~\citep{lee2019wide}:
\begin{equation*}
f^\text{lin}_w(x) \triangleq f_{w_0}(x) + \nabla_w f_w(x)^T|_{w=w_0} (w - w_0).
\end{equation*}
These networks are linear with respect to their parameters $w$, but can be highly non-linear with respect to their input $x$.
One of the advantages of linearized neural networks is that the dynamics of continuous-time or discrete-time gradient descent can be written analytically if the loss function is the mean squared error (MSE).
In particular, for continuous-time gradient descent with constant learning rate $\eta > 0$, we have \citep{lee2019wide}:
\begin{align}
w_t &= \nabla_w f_0(X) \Theta_0^{-1} \left(I - e^{-\eta \Theta_0 t}\right) (f_0(X) - Y),\label{eq:lin-weights-at-t} \\
f^\text{lin}_t(x) &= f_0(x) + \Theta_0(x, X) \Theta_0^{-1} \left(I - e^{-\eta \Theta_0 t}\right) (Y - f_0(X)), \label{eq:lin-preds-at-t}
\end{align}
where $\Theta_0 = \nabla_w f_0(X)^T\nabla_w f_0(X) \in \mathbb{R}^{nk \times nk}$ is the Neural Tangent Kernel (NTK) \citep{jacot2018ntk, lee2019wide} and $\Theta_0(x, X) = \nabla_w f_0(x)^T \nabla_w f_0(X)$.
The expressions for networks trained with weight decay is essentially the same (see Sec.~\ref{sec:wd-linearized}).
To keep the notation simple, we will use  $f_w(x)$ to indicate  $f^\text{lin}_w(x)$ from now on.

\textbf{Stochastic Gradient Descent.}
As mentioned in Sec.~\ref{sec:smoothed-sample-information}, a popular alternative approach to make information quantities well-defined is to use continuous-time SGD, which is defined by \citep{li2017stochastic, mandt2017stochastic}:
\begingroup
\setlength{\abovedisplayskip}{3pt}
\setlength{\belowdisplayskip}{3pt}
\begin{align}
d w_t = -\eta \nabla_w \Loss_w(w_t) dt + \eta \sqrt{\frac{1}{b} \Lambda(w_t)} dn(t),
\label{eq:SGD-SDE-plain}
\end{align}
\endgroup
where $\eta$ is the learning rate, $b$ is the batch size, $n(t)$ is a Brownian motion, and $\Lambda(w_t)$ is the covariance matrix of the per-sample gradients (see Sec.~\ref{sec:sgd-noise-cov} for details).
\rebuttal{Let $A_\text{SGD}$ be the algorithm that returns a random sample from the steady-state distribution of (\ref{eq:SGD-SDE-plain})}, and let $A_\text{ERM}$ be the deterministic algorithm that returns the global minimum $w^*$ of the loss $\Loss(w)$ (for a regularized linearized network $\Loss(w)$ is strictly convex). 
We now show that the non-smooth sample information $\USI(z_i, A_\text{SGD})$ is the same as the smooth sample information using SGD's steady-state covariance as the smoothing matrix and $A_\text{ERM}$ as the training algorithm.

\begin{prop}\label{prop:equality-of-non-smooth-and-smooth-SIs}
\begingroup
\setlength{\abovedisplayskip}{3pt}
\setlength{\belowdisplayskip}{3pt}
Let the loss function be regularized MSE, $w^*$ be the global minimum of it, and algorithms $A_\text{SGD}$ and $A_\text{ERM}$ be defined as above. Assuming $\Lambda(w)$ is approximately constant around $w^*$ and SGD's steady-state covariance remains constant after removing an example, we have
\begin{align}
\USI(z_i, A_\text{SGD}) = \USI_\Sigma(z_i, A_\text{ERM}) = \frac{1}{2} (w^* - w^*_{-i})^T \Sigma^{-1} (w^* - w^*_{-i}),
\label{eq:equality-of-non-smooth-and-smooth-SIs}
\end{align}
where $\Sigma$ is the solution of
\begin{align}
    H \Sigma + \Sigma H^T &= \frac{\eta}{b}\Lambda(w^*), \label{eq:lyapunov}
\end{align}
with $H = (\nabla_w f_0(X) \nabla_w f_0(X)^T + \lambda I)$ being the Hessian of the loss function.
\endgroup
\end{prop}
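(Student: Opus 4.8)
The plan is to compute both sides explicitly and show they coincide with the Mahalanobis distance on the right-hand side. The second equality, $\USI_\Sigma(z_i, A_\text{ERM}) = \frac{1}{2}(w^* - w^*_{-i})^T \Sigma^{-1}(w^* - w^*_{-i})$, is immediate: since $A_\text{ERM}$ is deterministic and returns $w^* = A_\text{ERM}(S)$ and $w^*_{-i} = A_\text{ERM}(S_{-i})$, \Cref{prop:smooth-SI-deterministic} applies verbatim. So the real content is the first equality, and I would obtain it by identifying the steady-state distribution of $A_\text{SGD}$ in closed form.

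First I would specialize the SGD SDE (eq.~\ref{eq:SGD-SDE-plain}) to the regularized MSE loss on the linearized network. Because $f_w$ is affine in $w$, the loss $\Loss(w)$ is quadratic and strictly convex, so its gradient is linear: $\nabla_w \Loss(w) = H(w - w^*)$, where $H = \nabla_w f_0(X)\nabla_w f_0(X)^T + \lambda I$ is the (constant) Hessian and $w^*$ its unique minimizer. Substituting this, and using the assumption that $\Lambda(w) \approx \Lambda(w^*)$ is constant near $w^*$, turns (\ref{eq:SGD-SDE-plain}) into a multivariate Ornstein--Uhlenbeck process $dw_t = -\eta H (w_t - w^*)\, dt + \eta\sqrt{\tfrac{1}{b}\Lambda(w^*)}\, dn(t)$.

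Next I would invoke the standard fact that such a linear SDE has a Gaussian steady-state distribution $\mathcal{N}(w^*, \Sigma)$ whose mean is the drift's fixed point and whose covariance $\Sigma$ solves the Lyapunov equation $(\eta H)\Sigma + \Sigma (\eta H)^T = \eta^2 \tfrac{1}{b}\Lambda(w^*)$; dividing by $\eta$ and using the symmetry of $H$ recovers (\ref{eq:lyapunov}). Hence $p_{A_\text{SGD}}(w \mid S) = \mathcal{N}(w^*, \Sigma)$. Repeating the argument on $S_{-i}$ gives $p_{A_\text{SGD}}(w \mid S_{-i}) = \mathcal{N}(w^*_{-i}, \Sigma_{-i})$, and the second assumption --- that the steady-state covariance is unchanged by dropping one example, $\Sigma_{-i} = \Sigma$ --- is exactly what makes both Gaussians share a single covariance.

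Finally I would plug these two Gaussians into the closed-form KL for two normals with a common covariance, $\KL{\mathcal{N}(\mu_1, \Sigma)}{\mathcal{N}(\mu_2, \Sigma)} = \frac{1}{2}(\mu_1 - \mu_2)^T \Sigma^{-1}(\mu_1 - \mu_2)$, with $\mu_1 = w^*$ and $\mu_2 = w^*_{-i}$, yielding $\USI(z_i, A_\text{SGD}) = \frac{1}{2}(w^* - w^*_{-i})^T \Sigma^{-1}(w^* - w^*_{-i})$ and closing the chain of equalities. The main obstacle is the third step: correctly deriving the stationary covariance as the Lyapunov solution, and, crucially, relying on the equal-covariance assumption so that the Gaussian KL collapses to a pure quadratic form. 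Without $\Sigma_{-i} = \Sigma$, the KL would carry extra $\tr(\Sigma^{-1}\Sigma_{-i})$ and $\log\det$ terms and the clean identity would fail; the two stated hypotheses ($\Lambda$ locally constant, covariance stable under leave-one-out) are precisely what neutralizes these complications.
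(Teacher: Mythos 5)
Your proposal is correct and follows essentially the same route as the paper: identify the steady state of the SGD diffusion as $\mathcal{N}(w^*, \Sigma)$ with $\Sigma$ solving the Lyapunov equation (\ref{eq:lyapunov}), invoke $\Sigma_{-i} = \Sigma$ so the Gaussian KL reduces to the quadratic form, and note that the $A_\text{ERM}$ side is immediate from \Cref{prop:smooth-SI-deterministic}. The only cosmetic difference is that you package the stationary distribution via standard Ornstein--Uhlenbeck theory for the linearized (quadratic-loss) SDE, whereas the paper verifies the same Gaussian directly against the Fokker--Planck equation in its appendix --- the same fact established two equivalent ways.
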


This proposition motivates the use of SGD's steady-state covariance as a smoothing matrix.
From equations (\ref{eq:equality-of-non-smooth-and-smooth-SIs}) and (\ref{eq:lyapunov}) we see that SGD's steady-state covariance is proportional to the flatness of the loss at the minimum, the learning rate, and to SGD's noise, while inversely proportional to the batch size.
When $H$ is positive definite, as in our case when using weight decay, the continuous Lyapunov equation (\ref{eq:lyapunov}) has a unique solution,
which can be found in $O(d^3)$ time using the Bartels-Stewart algorithm~\citep{bartles1972}.
One particular case when the solution can be found analytically is when $\Lambda(w^*)$ and $H$ commute, in which case $\Sigma = \frac{\eta}{2b} \Lambda H^{-1}$.
For example, this is the case for Langevin dynamics, for which $\Lambda(w) = \sigma^2 I$ in equation (\ref{eq:SGD-SDE-plain}).
In this case, we have
\begin{equation}
    \USI(z_i, A_\text{SGD}) = \USI_\Sigma(z_i, A_\text{ERM}) = \frac{b}{\eta \sigma^2} (w^* - w^*_{-i})^T H (w^* - w^*_{-i}),
\label{eq:weights-stability-diagonal-noise}
\end{equation}
which was already suggested by \citet{cook1977detection} as a way to measure the importance of a datum in linear regression.

\textbf{Functional Sample Information.}
The definition in \Cref{sec:functional-sample-information}  simplifies for linearized neural networks: The step from eq. (\ref{eq:fsi-difference-of-predictions}) to eq. (\ref{eq:fsi-fisher-approx}) becomes exact,  and the Fisher information matrix becomes independent of $w$ and equal to $F = \E_{x\sim p(x)} \left[ \nabla_w f_0(x) \nabla_w f_0(x)^T\right]$.
This shows that functional sample information can be seen as weight sample information with discretization $\Sigma$ equal to $F^{-1}$.
The functional sample information depends on the training data distribution, which is usually unknown. 
We can estimate it using a validation set:
\begin{align}
    \FSI_\sigma(z_i, A) &\approx \frac{1}{2 \sigma^2  n_\text{val}} \sum_{j=1}^{n_\text{val}} \left\lVert f_w(x_j^\text{val}) - f_{w_{-i}}(x_j^\text{val})\right\rVert\label{eq:fsi-pred-diff-empirical}\\
    &= \frac{1}{2 \sigma^2 n_\text{val}} (w-w_{-i})^T (H_\text{val}-\lambda I) (w-w_{-i}).\label{eq:fsi-empirical}
\end{align}
It is instructive to compare the sample weight information of (\ref{eq:weights-stability-diagonal-noise}) and functional sample information of (\ref{eq:fsi-empirical}).
Besides the constants, the former uses the Hessian of the training loss, while the latter uses the Hessian of the validation loss (without the $\ell_2$ regularization term).
One advantage of the latter is computational cost: As demonstrated in the next section,
we can use equation (\ref{eq:fsi-pred-diff-empirical}) to compute the prediction information, entirely in the function space, without any costly operation on weights.
For this reason, we focus on the linearized F-SI approximation in our experiments.
Since $\sigma^{-2}$ is just a multiplicative factor in (\ref{eq:fsi-empirical}) we set $\sigma=1$.
We also focus on the case where the training algorithm $A$ is discrete gradient descent running for $t$ epochs (equations \ref{eq:lin-weights-at-t} and \ref{eq:lin-preds-at-t}).

\textbf{Efficient Implementation.}
To compute the proposed sample information measures for linearized neural networks, we need to compute the change in weights $w - w_{-i}$ (or change in predictions $f_w(x) - f_{w_i}(x)$) after excluding an example from the training set.
This can be done without retraining using the analytical expressions of weight and prediction dynamics of linearized neural networks eq.~(\ref{eq:lin-weights-at-t}) and eq.~(\ref{eq:lin-preds-at-t}), which also work when the algorithm has not yet converged ($t<\infty$).
We now describe a series of measures to make the problem tractable.  First, to compute the NTK matrix we would need to store the Jacobian $\nabla f_0(x_i)$ of all training points and compute $\nabla_w f_0(X)^T \nabla_w f_0(X)$.
This is prohibitively slow and memory consuming for large DNNs.
Instead, similarly to \citet{zancato2020predicting}, we use low-dimensional random projections of per-example Jacobians to obtain provably good approximations of dot products~\citep{achlioptas2003database, li2006very}.
We found that just taking $2000$ random weights coordinates per layer provides a good enough approximation of the NTK matrix.
Importantly, we consider each layer separately, as different layers may have different gradient magnitudes.
With this method, computing the NTK matrix takes $O(nkd + n^2k^2 d_0)$ time, where $d_0 \approx 10^4$ is the number of sub-sampled weight indices ($d_0 \ll d$).
We also need to recompute $\Theta_0^{-1}$ after removing an example from the training set. This can be done in quadratic time by using rank-one updates of the inverse (see Sec. \ref{sec:inverse-update}).
Finally, when $t \neq \infty$ we need to recompute $e^{-\eta \Theta_0 t}$ after removing an example. This can be done in $O(n^2 k^2)$ time by downdating the eigen-decomposition of $\Theta_0$~\citep{gu1995downdating}.
Overall, the complexity of computing $w-w_i$ for all training examples is $O(n^2 k^2 d_0 + n (n^2 k^2 + C))$, $C$ is the complexity of a single pass over the training dataset.
The complexity of computing functional sample information for $m$ test samples is $O(C + n m k^2 d_0 + n (mnk^2 + n^2 k^2))$.
This depends on the network size lightly, only through $C$.

\section{Experiments}\label{sec:experiments}
In this section, we test the validity of linearized network approximation in terms of estimating the effects of removing an example and show several applications of the proposed information measures. Additional results and details  are provided in the supplementary Sec.~\ref{sec:appendix-more-results}.

\textbf{Accuracy of the linearized network approximation.}
\begin{table}[t]
    \centering
    \resizebox{.85\columnwidth}{!}{%
    \small
    \begin{tabular}{lllccccc}
    \toprule
     & Reg. & Method & MNIST MLP & \multicolumn{2}{c}{MNIST CNN} & \multicolumn{2}{c}{Cats and Dogs}\\
    \cmidrule{4-6}\cmidrule{7-8}
     &  &  & scratch & scratch & pretrained & pr. ResNet-18 & pr. ResNet-50 \\
    \midrule
    \multirow{4}{*}{\rotatebox[origin=c]{90}{weights}} & \multirow{2}{*}{$\lambda=0$} & Linearization & \textbf{0.987} & 0.193 & \textbf{0.870} & \textbf{0.895} & \textbf{0.968}\\
    & & Infl. functions & 0.935 & \textbf{0.319} & 0.736 & 0.675 & 0.897\\
    \cmidrule{2-8}
    & \multirow{2}{*}{$\lambda=10^3$} & Linearization & \textbf{0.977} & -0.012 & 0.964 & \textbf{0.940} & 0.816\\
    & & Infl. functions & \textbf{0.978} & \textbf{0.069} & \textbf{0.979} & 0.858 & \textbf{0.912}\\
    \midrule
    \multirow{4}{*}{\rotatebox[origin=c]{90}{predictions}} & \multirow{2}{*}{$\lambda=0$} & Linearization & \textbf{0.993} & 0.033 & \textbf{0.875} & \textbf{0.877} & \textbf{0.895}\\
    & & Infl. functions & 0.920 & \textbf{0.647} & 0.770 & 0.530 & 0.715\\
    \cmidrule{2-8}
    & \multirow{2}{*}{$\lambda=10^3$} & Linearization & \textbf{0.993} & 0.070 & \textbf{0.974} & \textbf{0.931} &  \textbf{0.519}\\
    & & Infl. functions & \textbf{0.990} & \textbf{0.407} & 0.954 & 0.753 & 0.506 \\
    \bottomrule
    \end{tabular}
    }
    \caption{Pearson correlations of weight change $\lVert w - w_{-i}\rVert_2^2$ and validation prediction change $\lVert f_w(X_\text{val}) - f_{w_{-i}}(X_\text{val}) \lVert_2^2$ norms computed with influence functions and linearized neural networks with their corresponding measures computed for standard neural networks with retraining.}
    \label{tab:ground-truth-correlations}
    
    \vspace{-1em}
    
\end{table}

We measure $\lVert w - w_{-i} \rVert_2^2$ and $\rVert f_w(X_\text{val}) - f_{w_{-i}}(X_\text{val})\rVert_2^2$ for each sample $z_i$ by training with and without that example. Then, instead of retraining, we use the efficient linearized approximation in Sec.~\ref{sec:experiments} to estimate the same quantities and measure their correlation with the ground-truth values (Table~\ref{tab:ground-truth-correlations}). For comparison, we also estimate these quantities using influence functions~\citep{koh2017understanding}. We consider two classification tasks: (a) a toy MNIST 4 vs 9 classification task and (b) Kaggle Dogs vs.\ Cats classification task \citep{dogsvscats}, both with 1000 examples. For MNIST we consider a fully connected network with a single hidden layer of 1024 ReLU units (MLP) and a small 4-layer convolutional network (CNN), either trained from scratch or pretrained on EMNIST letters~\citep{cohen2017emnist}. For cats vs dogs classification, we consider  ResNet-18 and ResNet-50 networks~\citep{he2016deep} pretrained on ImageNet. In both tasks, we train both with and without weight decay  ($\ell_2$ regularization).
The results in Table~\ref{tab:ground-truth-correlations} shows that linearized approximation correlates well with ground-truth when the network is wide enough (MLP) and/or pretraining is used (CNN with pretraining and pretrained ResNets).
This is expected, as wider networks can be approximated with linearized ones better~\citep{lee2019wide}, and pretraining decreases the distance from initialization, making the Taylor approximation more accurate.
Adding regularization also keeps the solution close to initialization, and generally increases the accuracy of the approximation.
Furthermore, in most cases linearization gives better results compared to influence functions, while also being around 30 times faster in our settings.

\textbf{Which examples are informative?}
\begin{figure}
    \centering
    \includegraphics[width=\textwidth]{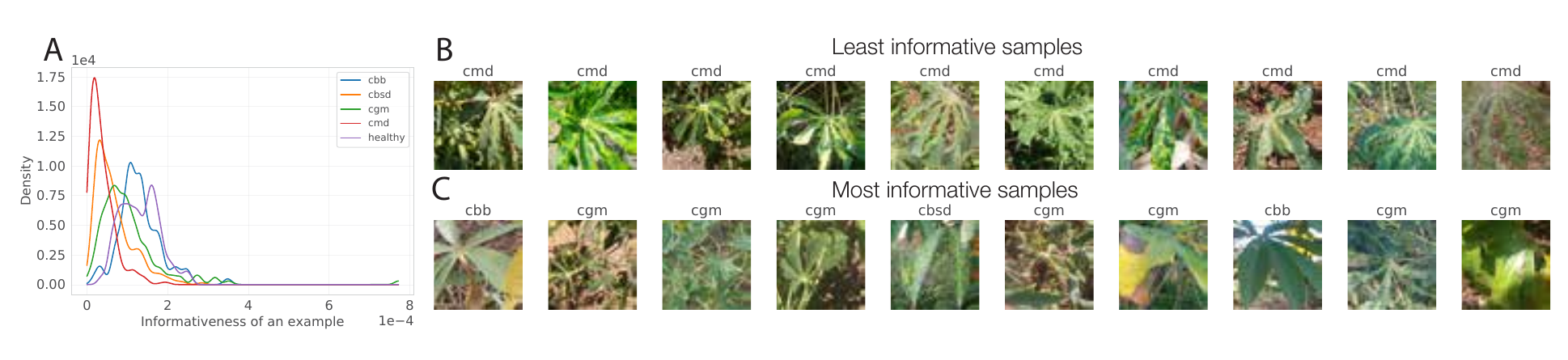}
    \caption{Functional sample information of samples in the iCassava classification task with 1000 samples, where the network is a pretrained ResNet-18. \textbf{A}: histogram of sample informations, \textbf{B}: 10 least informative samples, \textbf{C}: 10 most informative samples.}
    \label{fig:cassava-summary}
    
    \vspace{-1em}
    
\end{figure}
Fig. \ref{fig:cassava-summary}, and Fig.~\ref{fig:mnist-4vs9--summary} of the supplementary, plot the top 10 least and most important samples in iCassava plant disease classification~\citep{mwebaze2019icassava}, the MNIST 4 vs 9, and Kaggle cats vs dogs classification tasks.
Especially in the case of the last two, we see that the least informative samples look typical and easy, while the most informative ones look more challenging and atypical.
In the case of iCassava, the informative samples are more zoomed on features that are important for classification (e.g., the plant disease spots).
We observe that most samples have small unique information, possibly because they are easier or because the dataset may have many similar-looking examples.
While in the case of MNIST 4 vs 9 and Cats vs.~Dogs, the two classes have on average similar information scores, in Fig.~1a we see that in iCassava examples from rare classes (such as `healthy' and `cbb') are on average more informative.

\textbf{Which data source is more informative?}
\begin{figure}[t]
    \centering
    \begin{subfigure}{0.33\textwidth}
    \includegraphics[width=\textwidth]{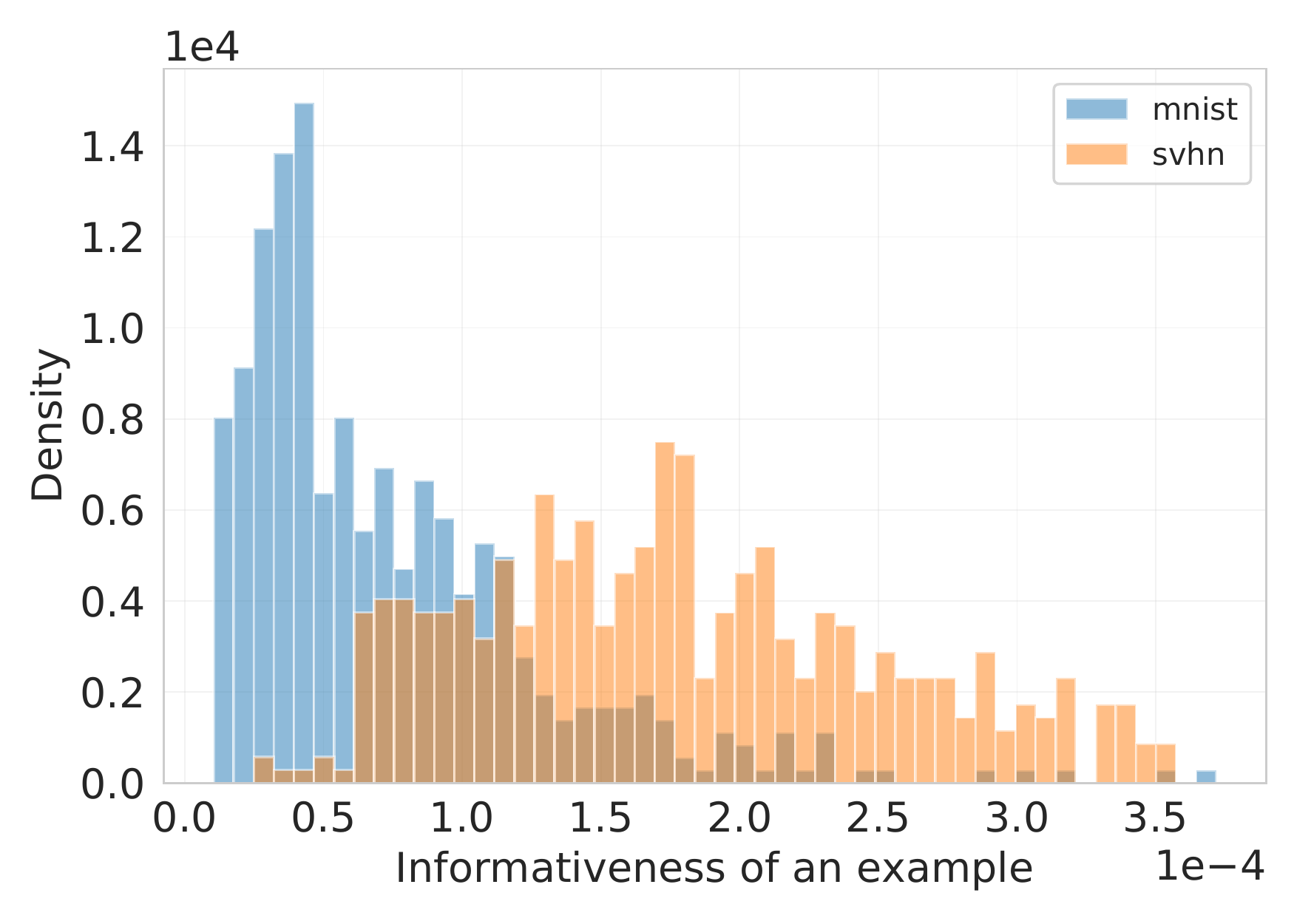}
    \caption{Comparing data sources}
    \label{fig:mnist-svhn}
    \end{subfigure}%
    \begin{subfigure}{0.33\textwidth}
    \includegraphics[width=\textwidth]{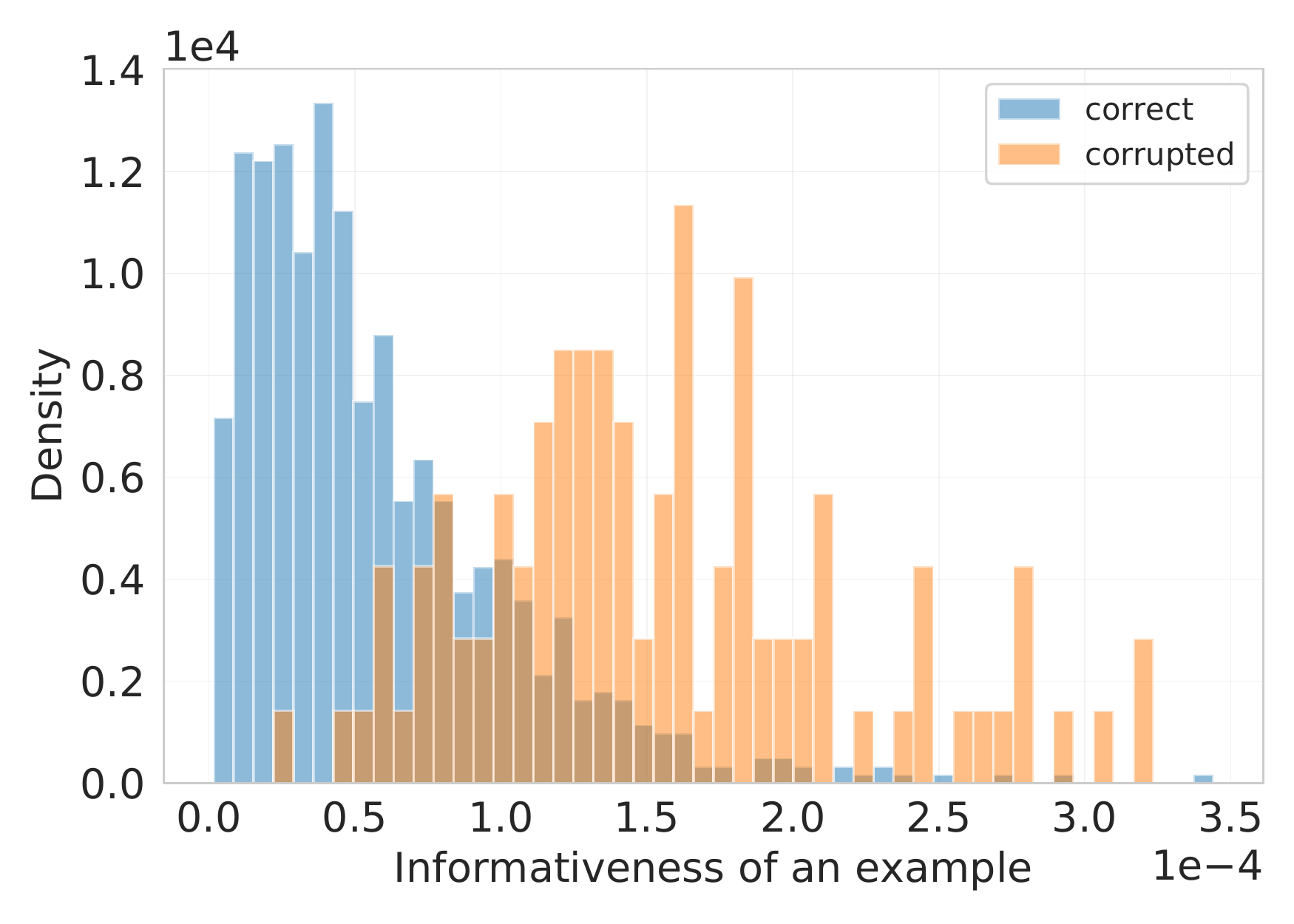}
    \caption{Detecting mislabeled examples}
    \label{fig:cassava-mislabeled}
    \end{subfigure}%
    \begin{subfigure}{0.33\textwidth}
    \includegraphics[width=\textwidth]{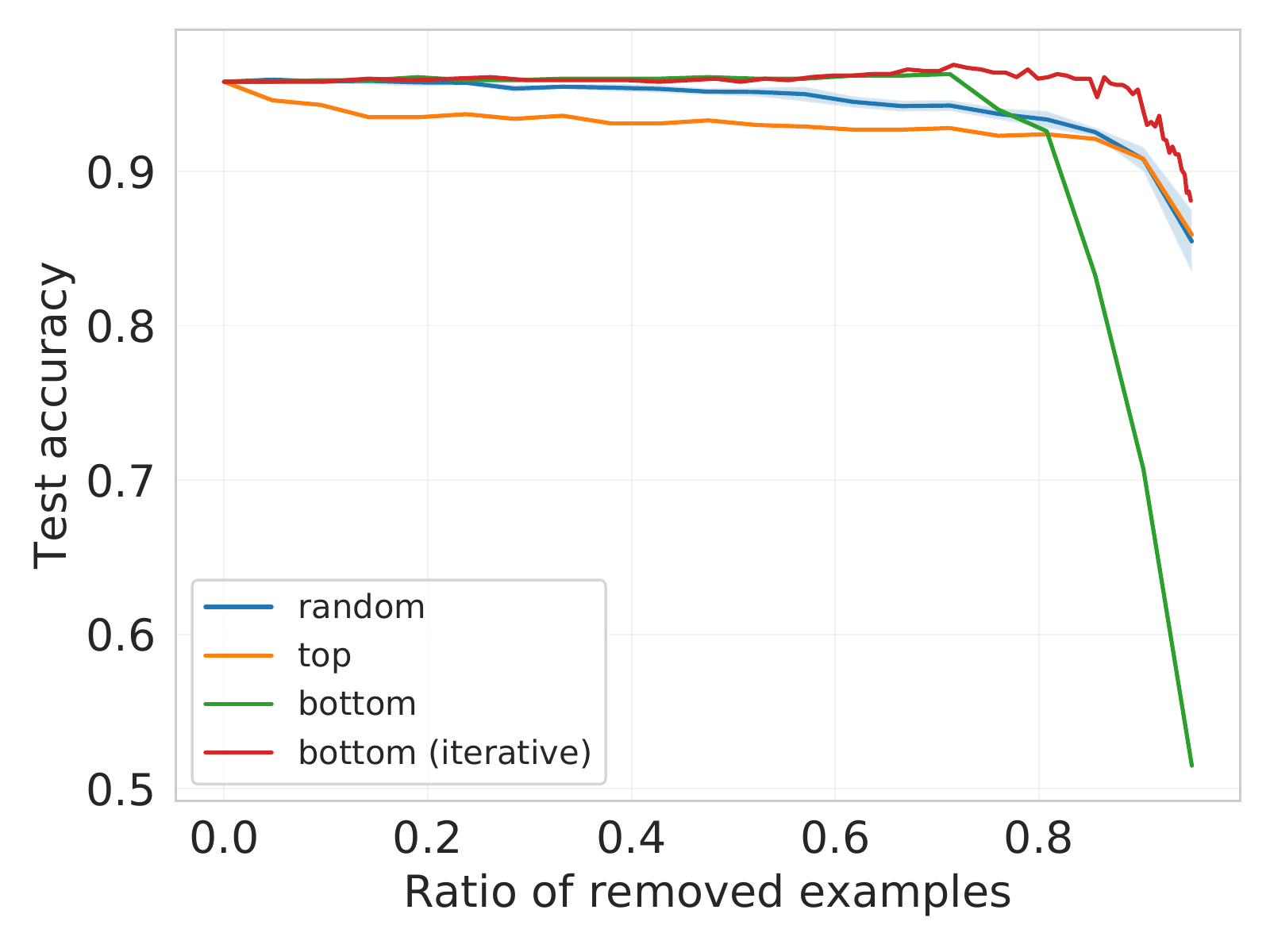}
    \caption{Summarizing datasets}
    \label{fig:mnist-data-sum}
    \end{subfigure}
    
    \vspace{-.5em}
    
    \caption{Applications of functional sample information. (a) Different sources of data for the same task (digit classification) can have a vastly different amount of information. (b) As expected, samples with the wrong labels carry more unique information. (c) Test accuracy as a function of the ratio of removed training examples using different strategies. }
    \label{fig:applications}
    
    \vspace{-1em}
    
\end{figure}
In this experiment, we train for 10-way digit classification where both the training and validation sets consist of 1000 samples, 500 from MNIST and 500 from SVHN.
We compute functional sample information for a pretrained ResNet-18 network.
The results presented in Fig.~\ref{fig:mnist-svhn} tell that SVHN examples are much more informative than MNIST examples.
\rebuttal{The same result holds if we change the network to be a pretrained DenseNet-121 (see Fig.~\ref{fig:mnist-svhn-densenet121})}.
This is intuitive, as SVHN examples have more variety.
One can go further and use our information measures for estimating informativeness of examples of dataset $A$ for training a classifier for task $B$, similar to~\citet{torralba2011unbiased}.

\textbf{Detecting mislabeled examples.}
We expect a mislabeled example to carry more unique information, since the network needs to memorize unique features of that particular example to classify it.
To test this, we add 10\% uniform label noise to MNIST 4 vs 9, Kaggle cats vs dogs, and iCassava classification tasks (all with 1000 examples in total), while keeping the validation sets clean.
Fig.~\ref{fig:cassava-mislabeled} plots the histogram of functional sample information for both correct and mislabeled examples \rebuttal{in the case of iCassava classification task with a pretrained ResNet-18}, while Fig.~\ref{fig:appendix-mnist-4vs9-mislabeled} and \ref{fig:appendix-cats-vs-dogs-mislabeled} plot that for MNIST 4 vs 9 and Kaggle cats vs dogs tasks, respectively.
The results indicate that mislabeled examples are much more informative on average.
This suggests that our information measures can be used to detect outliers or corrupted examples.

\textbf{Data summarization.}
We subsample the training dataset by removing a fraction of the least informative examples and measure the test performance of the resulting model. We expect that removing the least informative training samples should not affect the performance of the model. Note however that, since we are considering the \textit{unique} information, removing one sample can increase the informativeness of another. For this reason, we consider two strategies: In one we compute the informativeness scores once, and remove a given percentage of the least informative samples. In the other we remove 5\% of the least informative samples, recompute the scores, and iterate until we remove the desired number of samples. For comparison, we also consider removing the most informative examples (``top'' baseline) and randomly selected examples (``random'' baseline). The results on MNIST 4 vs 9 \rebuttal{classification task with the one-hidden-layer network described earlier,} are shown in Fig.~\ref{fig:mnist-data-sum}. Indeed, removing the least informative training samples has little effect on the test error, while removing the top examples has the most impact. Also, recomputing the information scores after each removal steps (``bottom iterative'') greatly improves the performance when many samples are removed, confirming that $\USI$ and $\FSI$ are good practical measures of unique information in a sample, but also that the total information in a large group is not simply the sum of the unique information of its samples.
\rebuttal{
Interestingly, removing more than 80\% of the least informative examples degrades the performance more than removing the same number of the most informative examples.
In the former case, we are left with a small number of most informative examples, some of which are outliers, while in the latter case we are left with the same number of least informative examples, most of which are typical.
Consequently, the performance is better in the latter case.
}

\textbf{Detecting under-sampled sub-classes.} Using CIFAR-10 images, we create a dataset of ``Pets vs Deer'': Pets has 4200 samples and deer 4000. The class ``pets'' consists of two unlabeled sub-classes, cats (200) and dogs (4000). Since there are relatively few cat images, we expect each to carry more unique information. \rebuttal{This is confirmed when we compute $\FSI$ for a pretrained ResNet-18 (see  Fig.~\ref{fig:cat-dog-deer} of the supplementary),} suggesting that the $\FSI$ can help to detect when an unlabeled sub-class of a larger class is under-sampled.

\section{Discussion and Future Work}
\rebuttal{The smooth (functional) sample information depends not only on the example itself, but on the network architecture, initialization, and the training procedure (i.e. the training algorithm).
This has to be the case, since an example can be informative with respect to one algorithm or architecture, but not informative for another one. Similarly, some examples may be more informative at the beginning of the training (e.g., simpler examples) rather than at the end.
Nevertheless, we found out that $\FSI$ still captures something inherent in the example.
In particular, as shown Sec.~\ref{sec:dependence-on-training-algo} of the supplementary, $\FSI$ scores computed with respect to different architectures are significantly correlated to each other (e.g. around 45\% correlation in case of ResNet-50 and DenseNet-121). 
Furthermore, $\FSI$ scores computed for different initializations are significantly correlated (e.g. around 36\%).
Similarly, $\FSI$ scores computed for different training lengths are strongly correlated (see Fig.~\ref{fig:inf-scores-at-different-times}).
This suggests that $\FSI$ computed with respect to one network can reveal useful information for another one.
Indeed, this is verified in Sec.~\ref{sec:dependence-on-training-algo} of the supplementary, where we redo the data summarization experiment, but with a slight change that $\FSI$ scores are computed for one network, but another network is trained to check the accuracy.
As shown in the Fig.~\ref{fig:mnist-data-sum-chaning-network} the results look qualitatively similar to those of the original experiment.
}

\rebuttal{The proposed sample information measure only the unique information provided by an example.
For this reason, it is not surprising that typical examples are usually the least informative, while atypical and rare ones are more informative.
This holds not only for visual data (as shown above), but also for textual data (see Sec.~\ref{sec:sentiment} of the supplementary).
While the typical examples are usually less informative according to the proposed measures, they still provide information about the decision functions, which is evident in the data summarization experiment -- removing lots of typical examples was worse than removing the same number of random examples.
Generalizing sample information to capture this kind of contributions is an interesting direction for future work.
Similar to Data Shapley~\citep{pmlr-v97-ghorbani19c}), one can look at the average unique information an example provides when considering along with a random subset of data.
One can also consider common information, high-order information, synergistic information, and other notions of information \textit{between} samples. The relation among these quantities is complex in general, even for 3 variables~\citep{williams-beer} and is an open challenge.}

\section{Conclusion}
There are many notions of information that are relevant to understanding the inner workings of neural networks. Recent efforts have focused on defining information in the weights or activations that do not degenerate for deterministic training. We look at the information in the training data, which ultimately affects both the weights and the activations. In particular, we focus on the most elementary case, which is the unique information contained in a sample, because it can be the foundation for understanding more complex notions.
However, our approach can be readily generalized to unique information of a group of samples. Unlike most previously introduced information measures, ours is tractable even for real datasets used to train standard network architectures, and does not require restriction to limiting cases. In particular, we can approximate our quantities without requiring the limit of small learning rate (continuous training time), or the limit of infinite network width. 

\subsubsection*{Acknowledgments}
We thank the anonymous reviewers whose comments/suggestions helped improve and clarify this manuscript.




\bibliographystyle{iclr2021_conference}
\bibliography{main}

\appendix

\section{Additional results and details}\label{sec:appendix-more-results}
In this section we present additional results and details that were not included in the main paper due to the space constraint.

\subsection{Approximating unique information with leave-one-out KL divergence}\label{sec:appendix-approximating-sample-info-with-kl}

In the main text we discussed that $I(\rebuttal{W}; z_i \mid S_{-i})$ can be upper bounded with $\KL{\rebuttal{p_A}(w\mid S)}{\rebuttal{p_A}(w\mid S_{-i})}$.
In this subsection we evaluate these quantities on a toy 2D dataset.
The dataset has two classes, each with 40 examples, generated from a Gaussian distribution (see Fig.~\ref{fig:2d-dataset}).
We consider training a linear regression on this dataset using stochastic gradient descent for 200 epochs, with batch size equal to 5 and 0.1 learning rate.
Fig.~\ref{fig:kl-approx-full} plots the distribution $\rebuttal{p_A}(w \mid S)$, Fig.~\ref{fig:kl-approx-remove} the distribution $\rebuttal{p_A}(w\mid S_{-i})$, while Fig.~\ref{fig:kl-approx-marginal} plots the distribution $\mathbb{E}_{z_i'} \rebuttal{p_A}(w \mid S_{-i}, z'_i)$.
On this example we get $I(\rebuttal{W}; z_i \mid S_{-i}) \approx 1.3$ and $\KL{\rebuttal{p_A}(w\mid S)}{\rebuttal{p_A}(w\mid S_{-i})} \approx 3.0$.

\begin{figure}[h]
    \centering
    \begin{subfigure}{0.22\textwidth}
    \includegraphics[width=\textwidth]{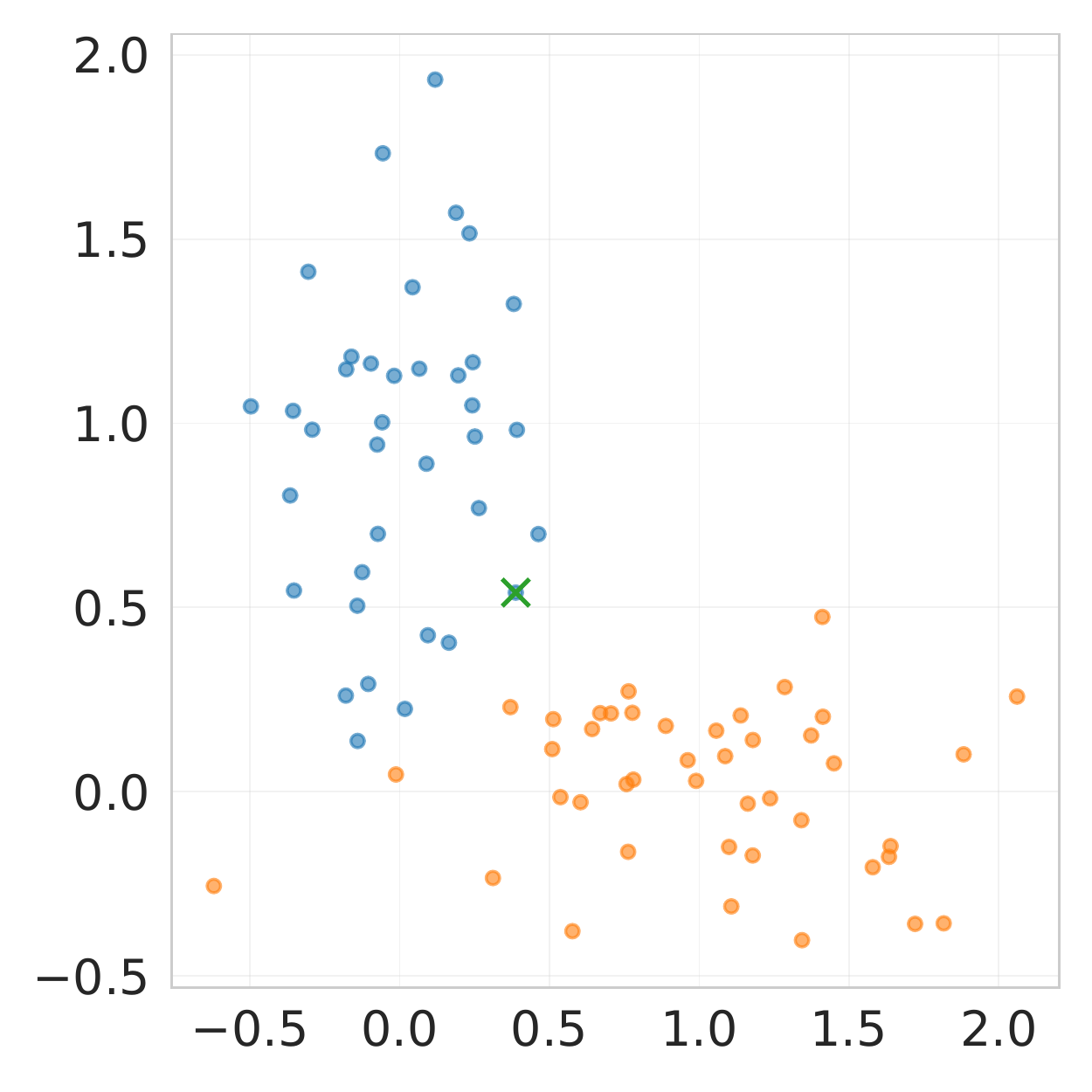}
    \caption{Data}
    \label{fig:2d-dataset}
    \end{subfigure}%
    \begin{subfigure}{0.22\textwidth}
    \includegraphics[width=\textwidth]{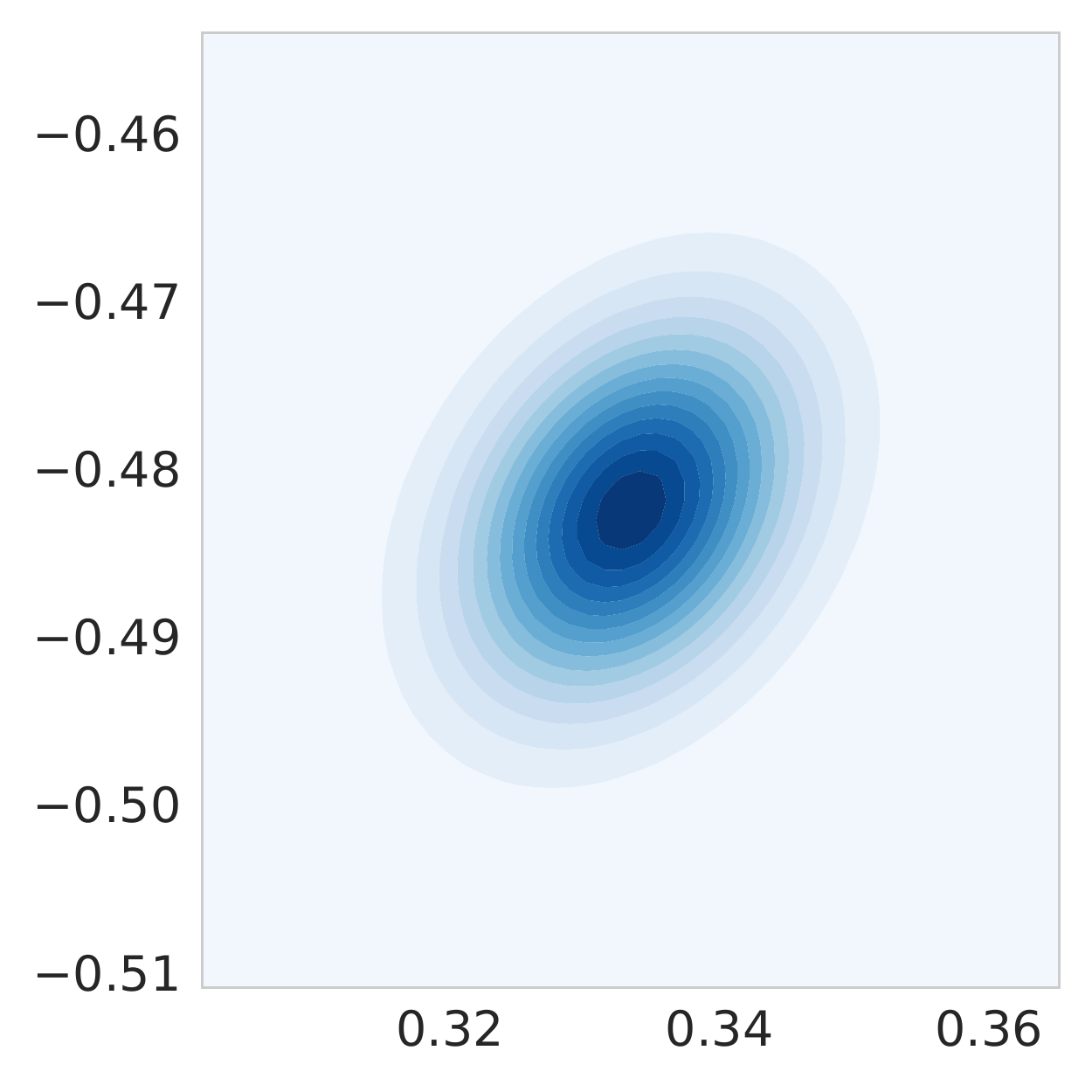}
    \caption{$\rebuttal{p_A}(w \mid S)$}
    \label{fig:kl-approx-full}
    \end{subfigure}%
    \begin{subfigure}{0.22\textwidth}
    \includegraphics[width=\textwidth]{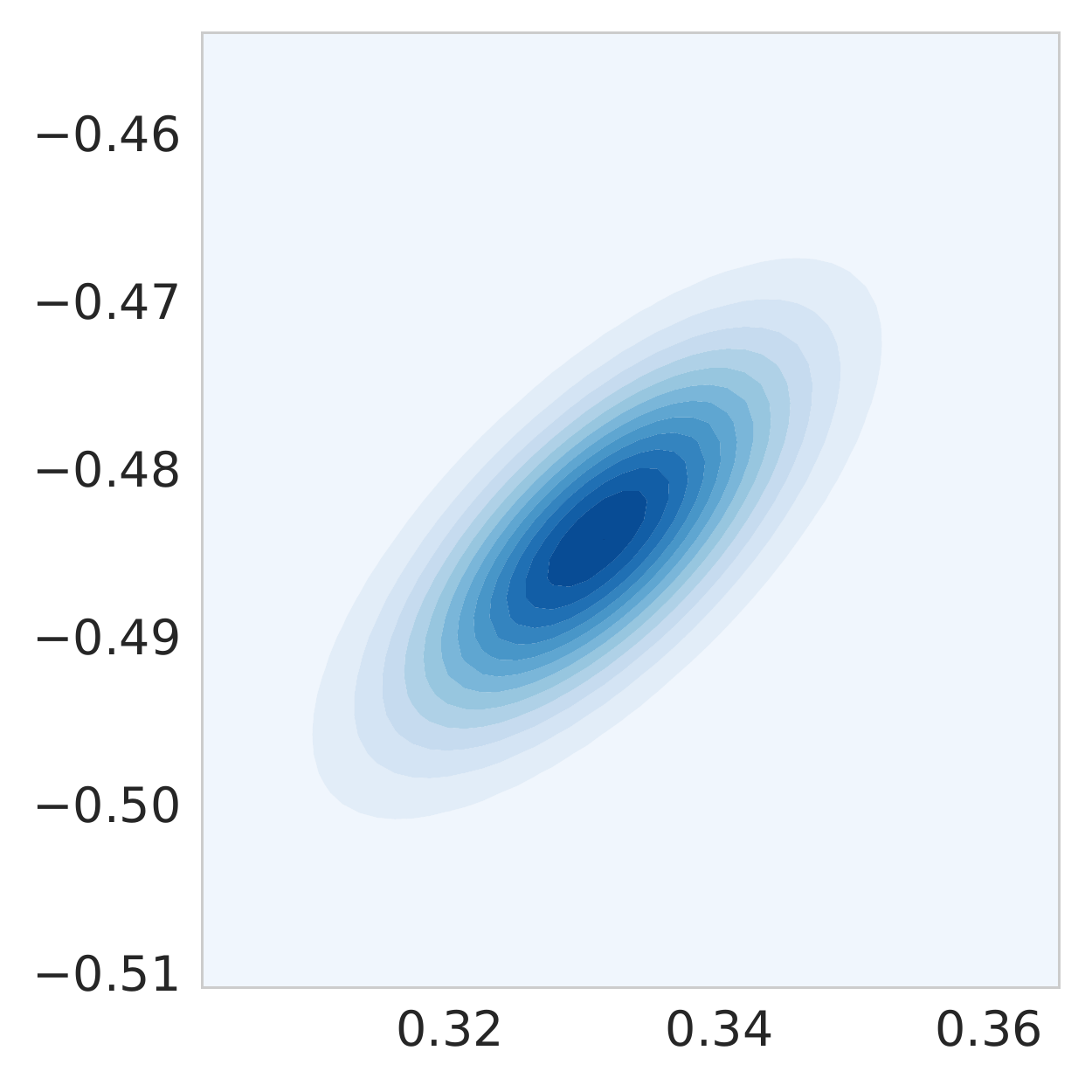}
    \caption{$\rebuttal{p_A}(w \mid S_{-i})$}
    \label{fig:kl-approx-remove}
    \end{subfigure}%
    \begin{subfigure}{0.22\textwidth}
    \includegraphics[width=\textwidth]{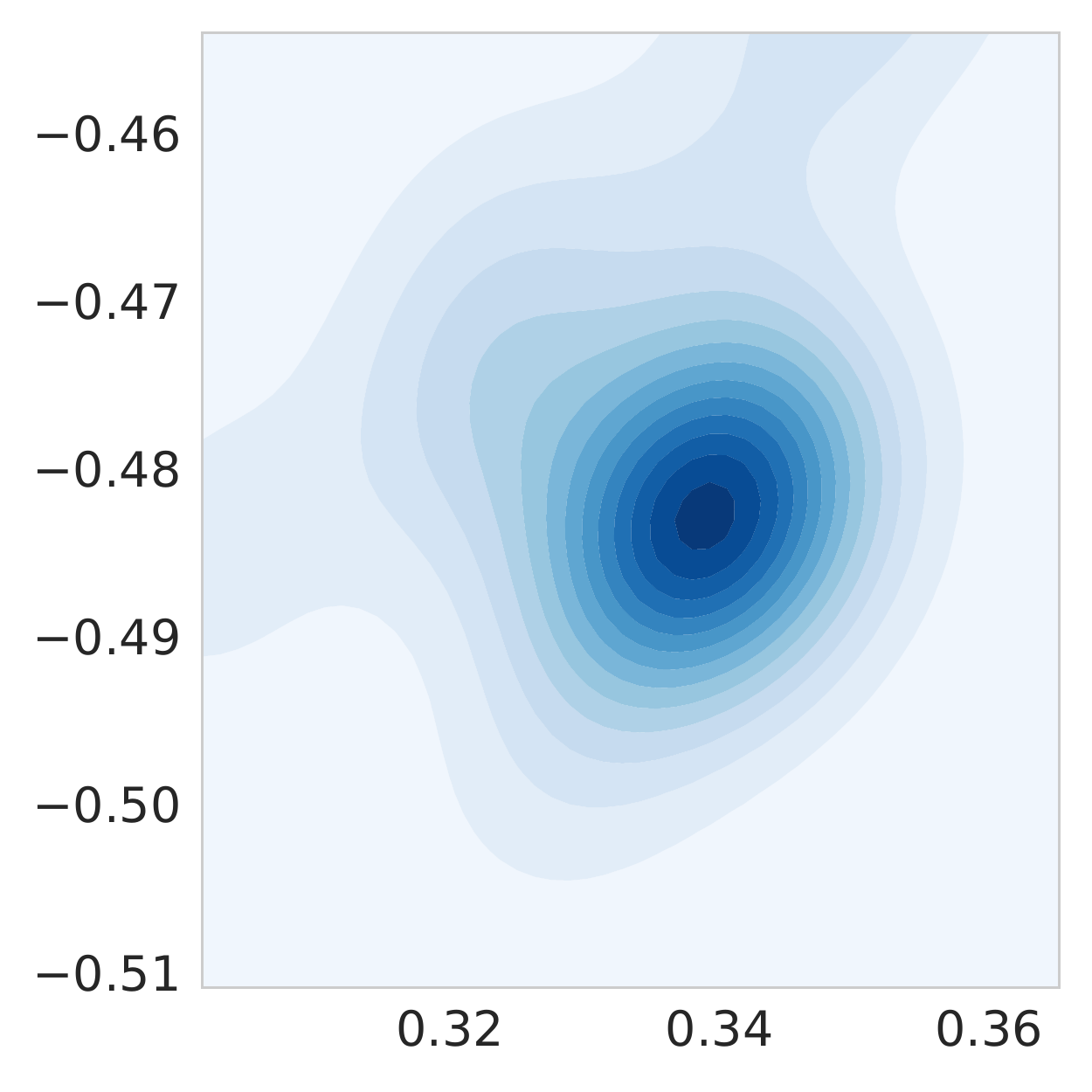}
    \caption{$\mathbb{E}_{z'_i} \rebuttal{p_A}(w \mid S_{-i}, z'_i)$}
    \label{fig:kl-approx-marginal}
    \end{subfigure}
    \caption{A toy dataset and key distributions involved in upper bounding the unique sample information with leave-one-out KL divergence.}
    \label{fig:apprimxating-sample-info-with-kl}
\end{figure}

\subsection{Which examples are informative?}

In this subsection we present the most and least informative examples for two more classification tasks: MNIST 4 vs 9 classification with MLP  and Kaggle cat vs dog classification with a pretrained ResNet-18 (Fig.~\ref{fig:mnist-4vs9--summary}).
The results indicate that most informative examples are often the challenging and atypical ones, while the least informative ones are easy and typical ones.
For the MLP network on MNIST 4 vs 9 we set $t=2000$ and $\eta=0.001$; for the ResNet-18 on cats vs dog classification $t=1000$ and $\eta=0.001$; and for the ResNet-18 on iCassava dataset $t=5000$ and $\eta=0.0003$.

\begin{figure}[h]
    \centering
    \includegraphics[width=\textwidth]{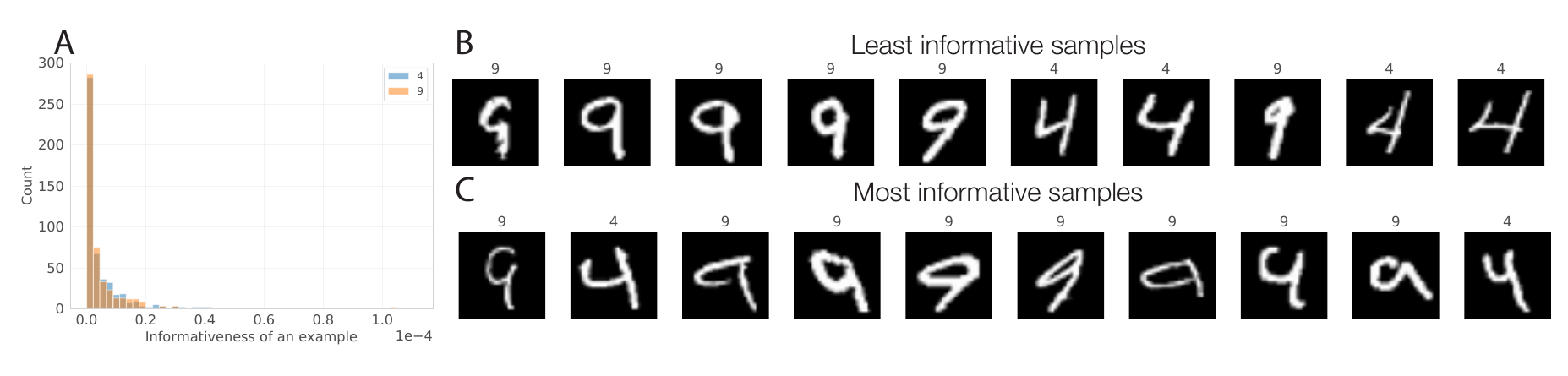}
    \includegraphics[width=\textwidth]{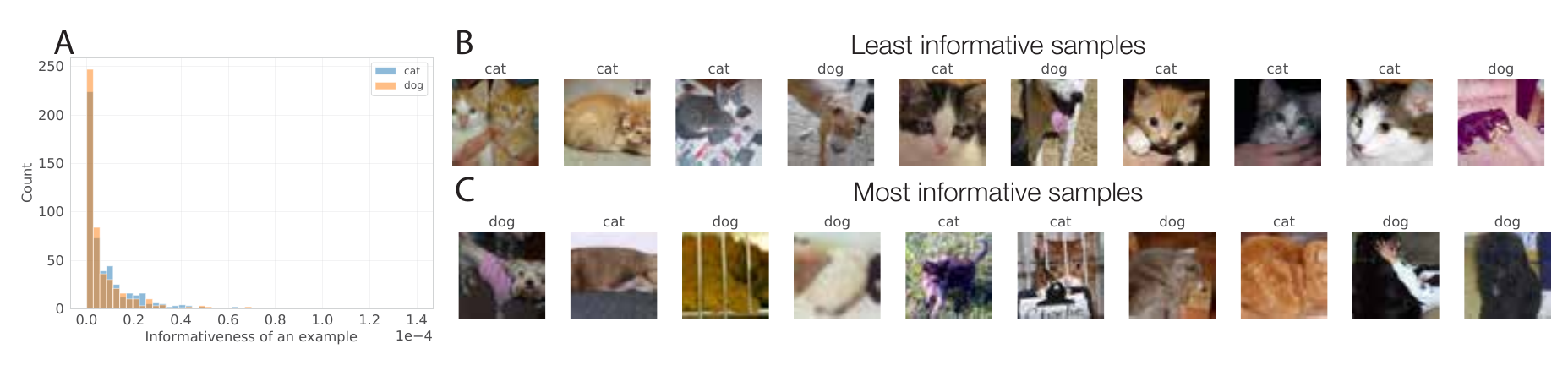}
    \caption{Functional sample information of samples in MNIST 4 vs 9 classification task (top) and Dogs vs.~Cats (bottom), with \textbf{A}: histogram of sample informations, \textbf{B}: 10 least informative samples, \textbf{C}: 10 most informative samples.}
    \label{fig:mnist-4vs9--summary}
\end{figure}

\subsection{Accuracy of the linearized network approximation}

In this subsection we present additional details of experiments presented in Table~\ref{tab:ground-truth-correlations}.
In all datasets used the validation set also has 1000 samples.
The fully connected network, MLP, consists of a single layer of 1024 ReLU units. The architecture of the CNN used for the experiments in Table~\ref{tab:ground-truth-correlations} is as follow.

\begin{table}[h]
    \centering
    \resizebox{.7\columnwidth}{!}{%
    \small
    \begin{tabular}{ll}
    \toprule
    Layer & Layer parameters\\
    \midrule
    Conv. 1     & 32 filters, 4x4 kernel, stride $=$ 2, padding $=$ 1, ReLU activation\\
    Conv. 2     & 32 filters, 4x4 kernel, stride $=$ 2, padding $=$ 1, ReLU activation\\
    Conv. 3     & 64 filters, 3x3 kernel, stride $=$ 1, padding $=$ 0, ReLU activation\\
    Conv. 4     & 256 filters, 3x3 kernel, stride $=$ 1, padding $=$ 0, ReLU activation\\
    Fully connected & 256 ReLU neurons\\
    Fully connected & 1 linear unit\\
    \bottomrule
    \end{tabular}
    }
    \label{tab:cnn-arch}
\end{table}

In all our experiments, when using pretrained ResNets, we disable the exponential averaging of batch statistics in batch norm layers.
The exact details of running influence functions and linearized neural network predictions are presented in Table~\ref{tab:ground-truth-details}.

\begin{table}[h]
    \centering
    \resizebox{.8\columnwidth}{!}{%
    \small
    \begin{tabular}{lll}
    \toprule
    Experiment & Method & Details\\
    \midrule
    \multirow{3}{*}{MNIST MLP (scratch)} & Brute force & 2000 epochs, learning rate $=$ 0.001, batch size $=$ 1000\\
    & Infl. functions & LiSSA algorithm, 1000 recursion steps, scale $=$ 1000 \\
    & Linearization & $t=2000$, learning rate $=$ 0.001\\
    \midrule
    \multirow{3}{*}{MNIST CNN (scratch)} & Brute force & 1000 epochs, learning rate $=$ 0.01, batch size $=$ 1000\\
    & Infl. functions & LiSSA algorithm, 1000 recursion steps, scale $=$ 1000 \\
    & Linearization & $t=1000$, learning rate $=$ 0.01\\
    \midrule
    \multirow{3}{*}{MNIST CNN (pretrained)} & Brute force & 1000 epochs, learning rate $=$ 0.002, batch size $=$ 1000\\
    & Infl. functions & LiSSA algorithm, 1000 recursion steps, scale $=$ 1000 \\
    & Linearization & $t=1000$, learning rate $=$ 0.002\\
    \midrule
    \multirow{3}{*}{Cats and dogs} & Brute force & 500 epochs, learning rate $=$ 0.001, batch size $=$ 500\\
    & Infl. functions & LiSSA algorithm, 50 recursion steps, scale $=$ 1000 \\
    & Linearization & $t=1000$, learning rate $=$ 0.001\\
    \bottomrule
    \end{tabular}
    }
    \caption{Details of experiments presented in Table~\ref{tab:ground-truth-correlations}. For influence functions, we add a dumping term with magnitude 0.01 whenever $\ell_2$ regularization is not used (i.e. $\lambda = 0$).}
    \label{tab:ground-truth-details}
\end{table}

\subsection{How much does sample information depend on algorithm?}\label{sec:dependence-on-training-algo}
\begin{figure}[h]
\centering
\begin{subfigure}{0.49\textwidth}
\includegraphics[width=0.8\textwidth]{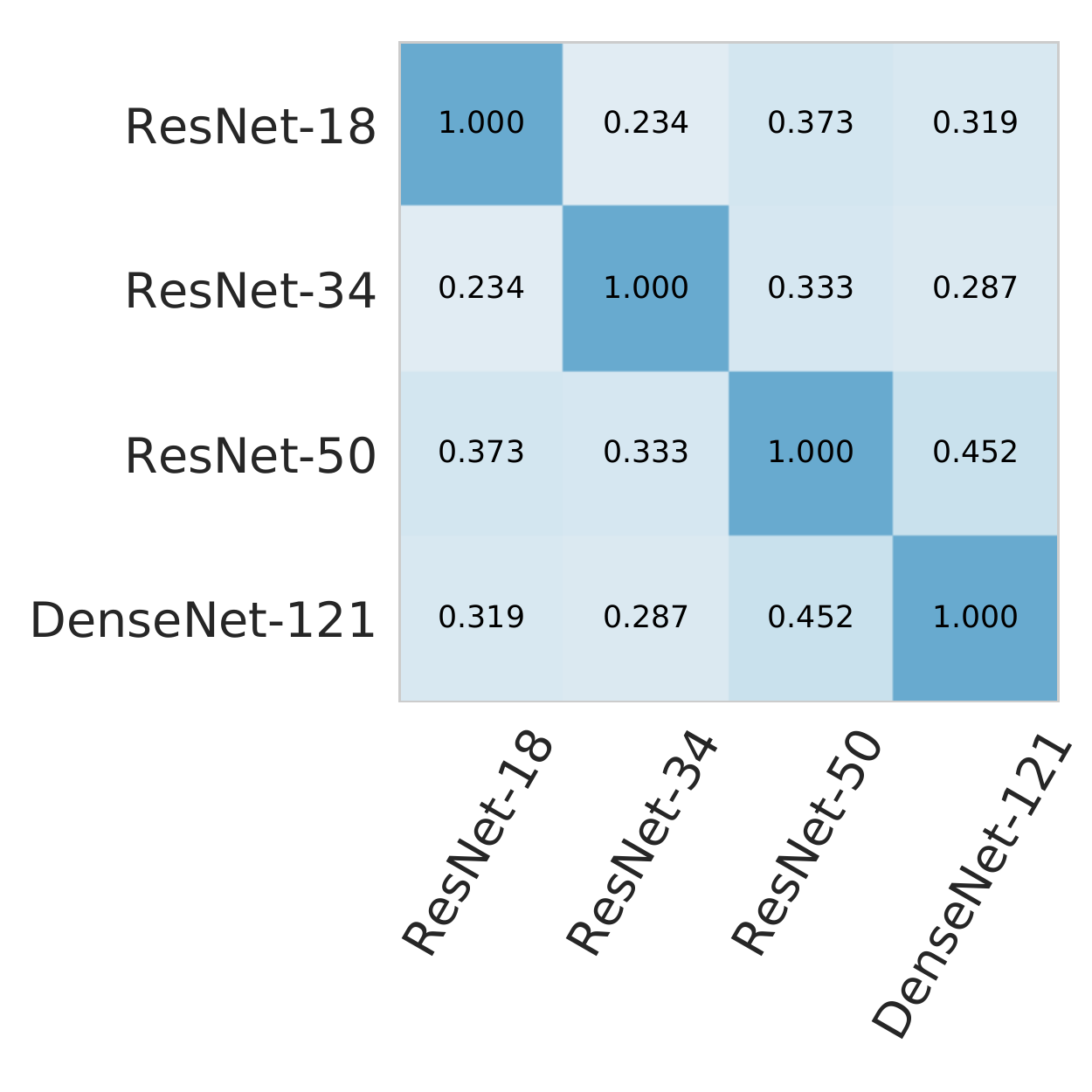}
\caption{Varying architecture}
\label{fig:inf-scores-for-different-networks}
\end{subfigure}%
\begin{subfigure}{0.49\textwidth}
\includegraphics[width=0.8\textwidth]{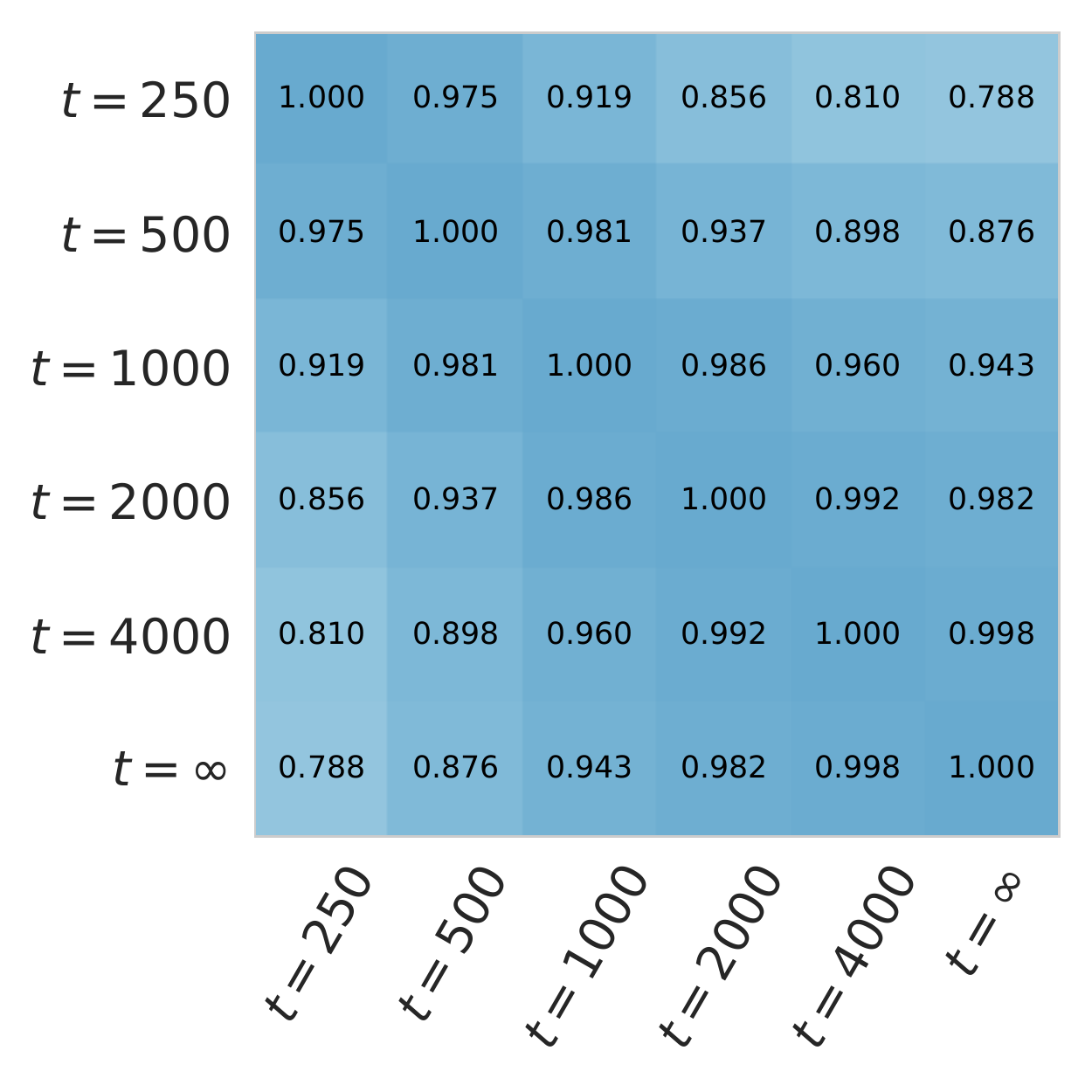}
\caption{Varying training length}
\label{fig:inf-scores-at-different-times}
\end{subfigure}
\caption{\rebuttal{Correlations between functional sample information scores computed for different architectures and training lengths. \textbf{On the left}: correlations between F-SI scores of the 4 pretrained networks, all computed with setting $t=1000$ and $\eta=0.001$.
\textbf{On the right:} correlations between F-SI scores computed for pretrained ResNet-18s, with learning rate $\eta=0.001$, but varying training lengths $t$. All reported correlations are averages over 10 different runs. The training dataset consists of 1000 examples from the Kaggle cats vs dogs classification task.
}}
\end{figure}
\begin{figure}
    \centering
    \begin{subfigure}{\textwidth}
    \includegraphics[width=\textwidth]{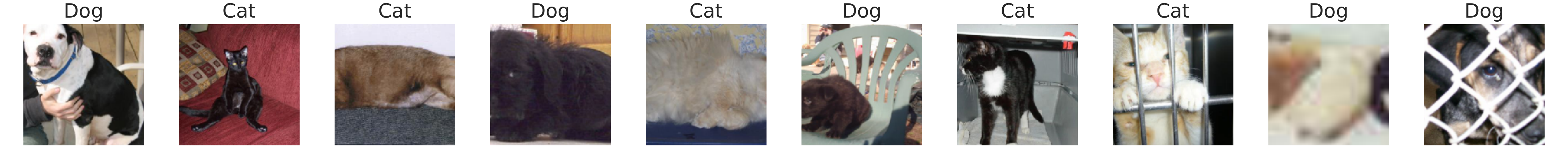}
    \caption{ResNet-18}
    \end{subfigure}
    
    \begin{subfigure}{\textwidth}
    \includegraphics[width=\textwidth]{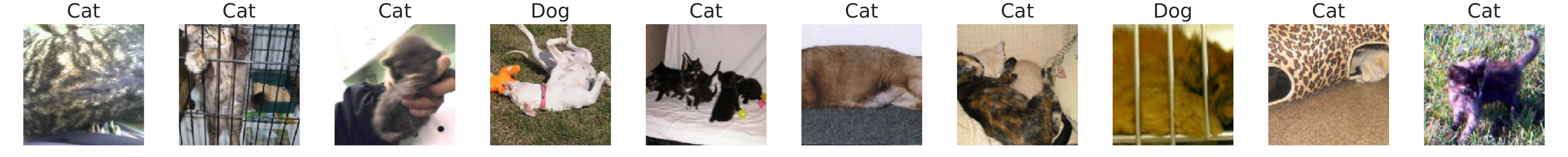}
    \caption{ResNet-50}
    \end{subfigure}
    
    \begin{subfigure}{\textwidth}
    \includegraphics[width=\textwidth]{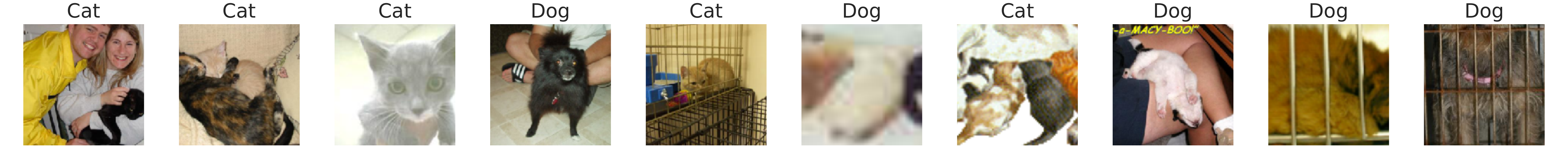}
    \caption{DenseNet-121}
    \end{subfigure}
    \caption{\rebuttal{Top 10 most informative examples from Kaggle cats vs dogs classification task for three pretrained networks: ResNet-18, ResNet-50, and DenseNet-121.}}
    \label{fig:most-informative-examples-for-different-networks}
\end{figure}
The proposed information measures depend on the training algorithm, which includes the architecture, seed, initialization, and training length.
This is unavoidable as one example can be more informative for one algorithm and less informative for another.
Nevertheless, in this subsection, we test how much does informativeness depend on the network, initialization, and training time.
We consider the Kaggle cats vs dogs classification task with 1000 training examples.
\rebuttal{First, fixing the training time $t=1000$, we consider four pretrained architectures: ResNet-18, ResNet-34, ResNet-50, and DenseNet-121.
The correlations between F-SI scores computed for the four architectures are presented in Fig.~\ref{fig:inf-scores-for-different-networks}.
We see that F-SI scores computed for two completely different architectures, such as ResNet-50 and DenseNet-121 have significant correlation, around 45\%.
Furthermore, there is a significant overlap in top 10 most informative examples for these networks (see Fig.~\ref{fig:most-informative-examples-for-different-networks}).}
Next, fixing the network to be a pretrained ResNet-18 and fixing the training length $t=1000$, we consider changing initialization of the classification head (which is not pretrained).
In this case the correlation between F-SI scores is $0.364 \pm 0.066$.
Finally, fixing the network to be a ResNet-18 and fixing the initialization of the classification head, we consider changing the number of iterations in the training.
We find strong correlations between F-SI scores of different training lengths (see Fig.~\ref{fig:inf-scores-at-different-times}).

\begin{figure}[t]
    \centering
    \begin{subfigure}{0.49\textwidth}
    \includegraphics[width=0.8\textwidth]{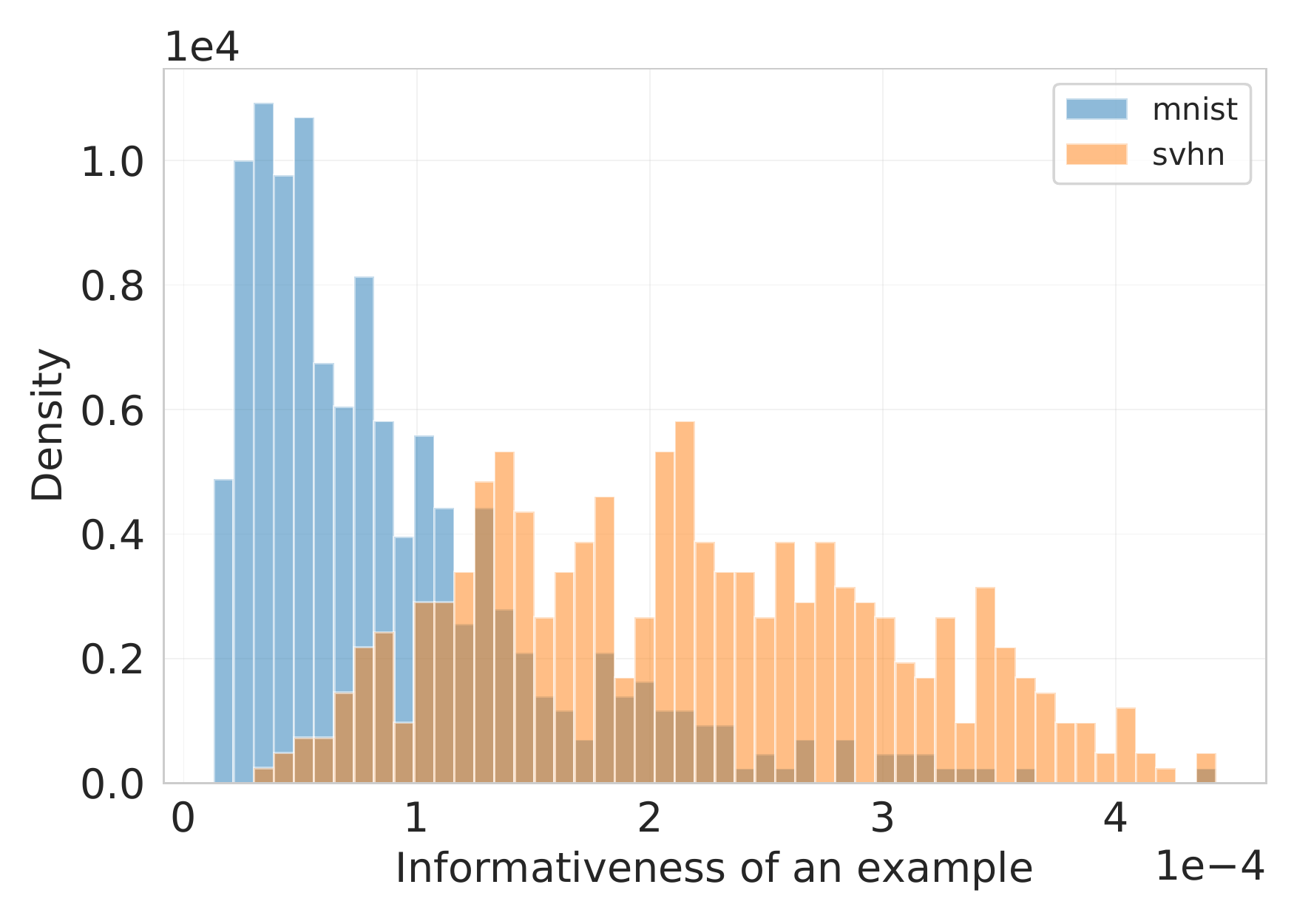}
    \caption{MNVT vs SVHN experiment for DenseNet-121}
    \label{fig:mnist-svhn-densenet121}
    \end{subfigure}%
    \begin{subfigure}{0.49\textwidth}
    \includegraphics[width=0.8\textwidth]{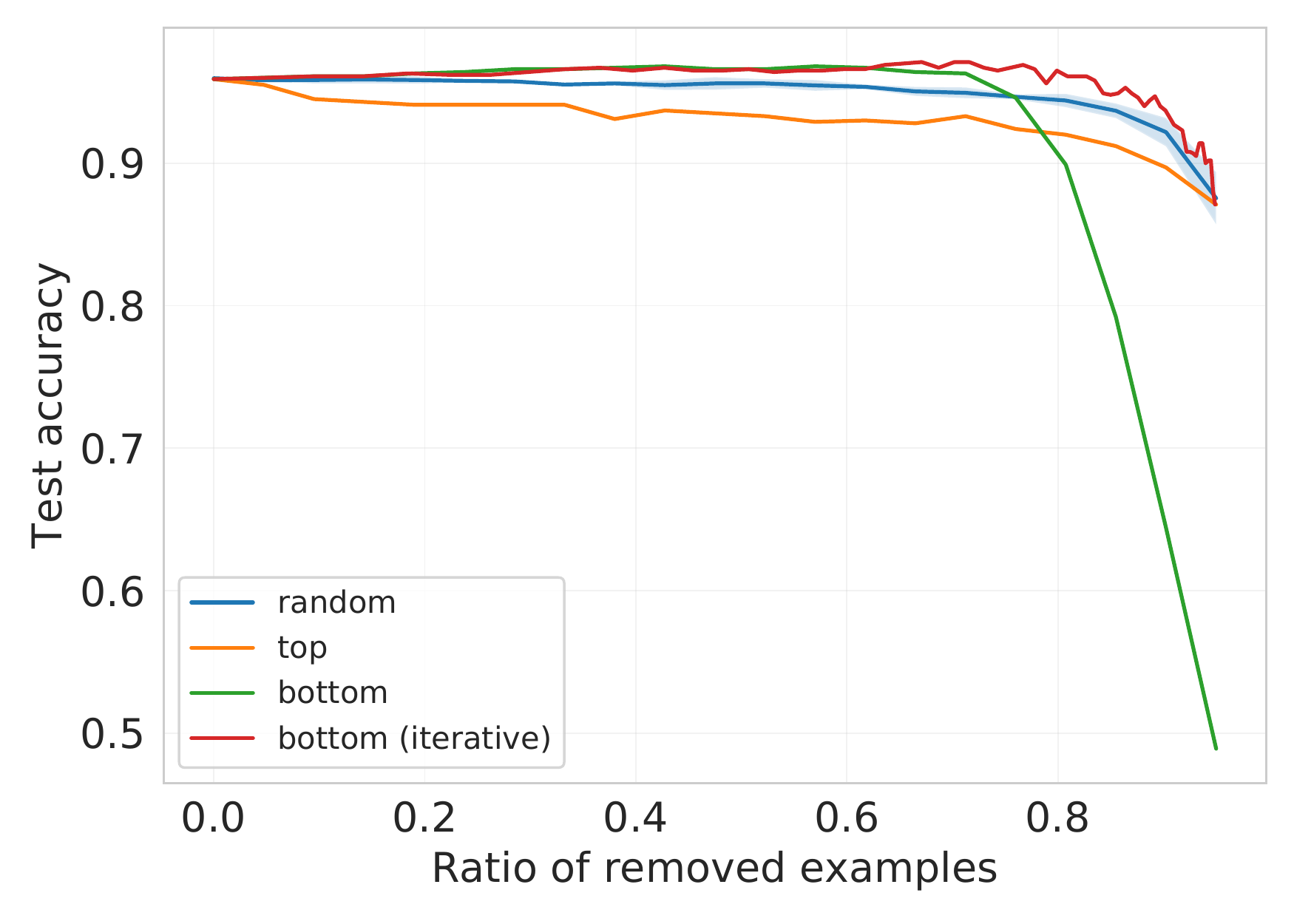}
    \caption{MNIST 4 vs 9 data summarization}
    \label{fig:mnist-data-sum-chaning-network}
    \end{subfigure}%
    \label{fig:dependence-of-results-on-network-choice}
    \caption{\rebuttal{Testing how much F-SI scores computed for different networks are qualitatively different. \textbf{On the left:} the MNIST vs SVHN experiment with a pretrained DesneNet-121 instead of a pretrained ResNet-18.
    \textbf{On the right:} Data summarization for the MNIST 4 vs 9 classification task, where the F-SI scores are computed for a one-hidden-layer network, but a two-hidden-layer network is trained to produce the test accuracies.}}
\end{figure}

\paragraph{MNIST vs SVHN experiment for DenseNet-121.} \rebuttal{In this paragraph we redo the MNIST vs SVHN experiment (Fig~\ref{fig:mnist-svhn}) but for a different network, a pretrained DenseNet-121, to test the dependence of the results on the architecture choice. The results are presented in Fig.~\ref{fig:mnist-svhn-densenet121} and are qualitatively identical to the results derived with a pretrained ResNet18 (Fig.~\ref{fig:mnist-svhn}).}

\paragraph{Data summarization with a change of architecture.} \rebuttal{To test how much sample information scores computed for one network are useful for another network, we reconsider the MNIST 4 vs 9 data summarization experiment (Fig.~\ref{fig:mnist-data-sum}). This time we compute F-SI scores for the original network with one hidden layer, but train a two-hidden-layer neural network (both layers having 1024 ReLU units). The data summarization results presented in Fig.~\ref{fig:mnist-data-sum-chaning-network} are qualitatively and quantitively almost identical to the original results presented in the main text. This confirms that F-Si scores computed for one network can be useful for another network.}

\subsection{Detecting incorrect examples}
In this subsection we present results on detecting mislabeled examples for two more classification tasks: MNIST 4 vs 9 classification with an MLP (Fig.~\ref{fig:appendix-mnist-4vs9-mislabeled}) and Kaggle cat vs dog classification with a pretrained ResNet-18 (Fig.~\ref{fig:appendix-cats-vs-dogs-mislabeled}).
The results are qualitatively the same and assert that, indeed, mislabeled examples are more informative, on average.
For the MLP network we set $t=2000$ and $\eta = 0.001$; for the ResNet-18 on the cats vs dog classification task $t=1000$ and $\eta=0.001$; and for the ResNet-18 on the iCassava dataset (in the main text) we set $t=10000$ and $\eta = 0.0001$.

\begin{figure}[h]
    \centering
    \begin{subfigure}{0.4\textwidth}
    \includegraphics[width=\textwidth]{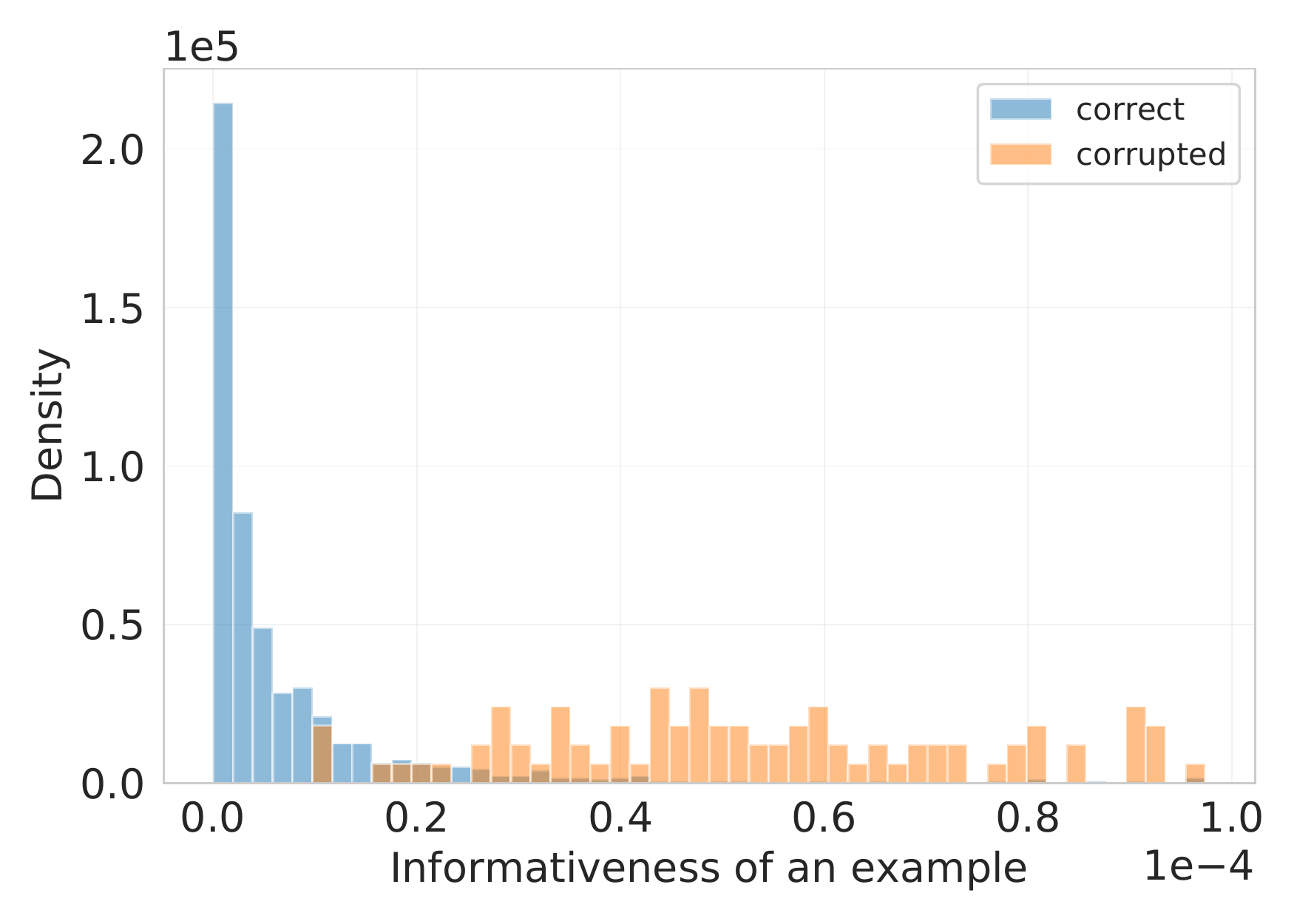}
    \caption{MNIST 4 vs 9}
    \label{fig:appendix-mnist-4vs9-mislabeled}
    \end{subfigure}%
    \begin{subfigure}{0.4\textwidth}
    \includegraphics[width=\textwidth]{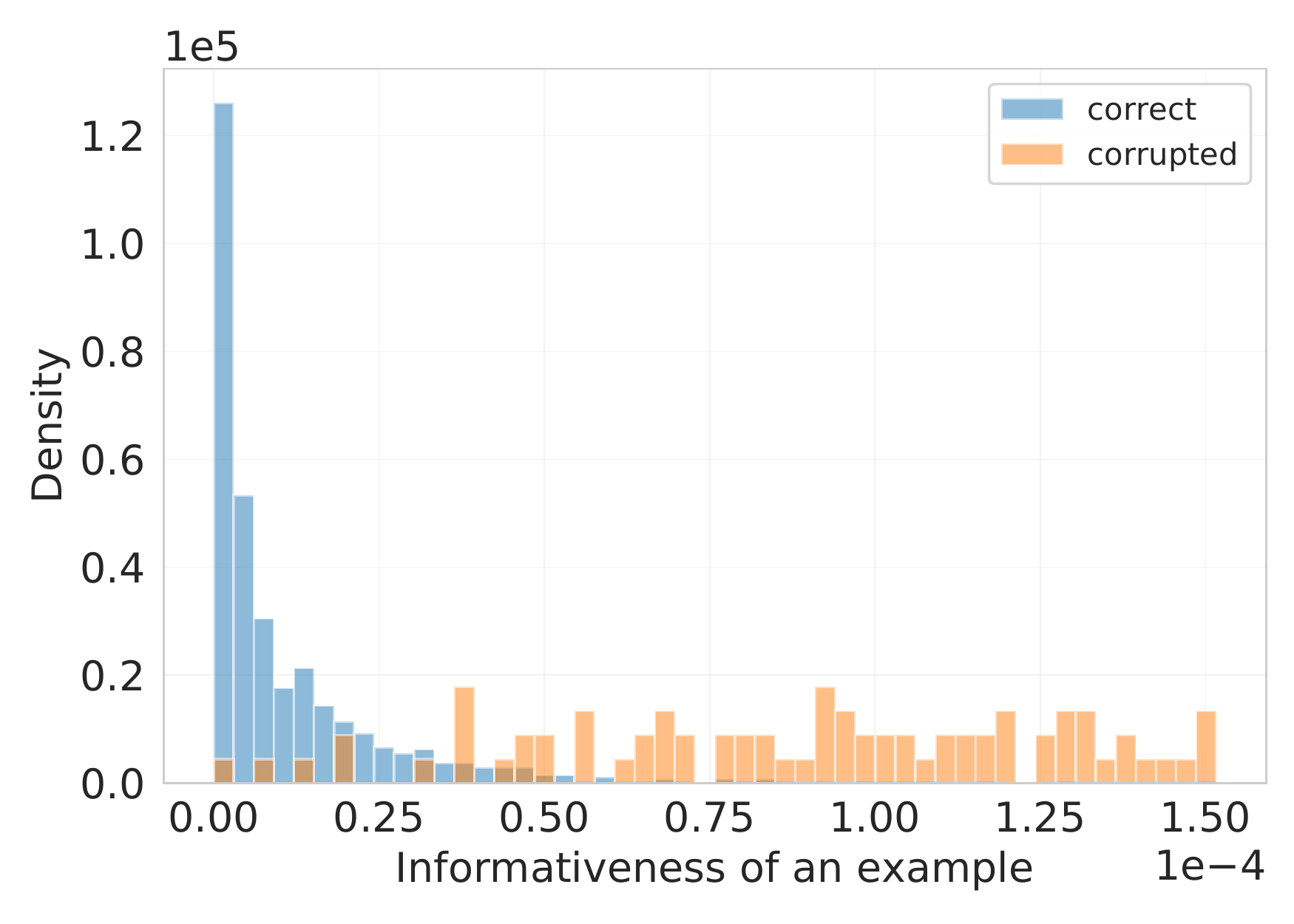}
    \caption{Kaggle Dogs vs Cats}
    \label{fig:appendix-cats-vs-dogs-mislabeled}
    \end{subfigure}
    \caption{Additional results comparing the information content of samples with correct and incorrect labels on MNIST 4 vs 9 and Dogs vs Cats.}
    \label{fig:appendix-detecting-mislabeled}
\end{figure}

\subsection{Detecting under-sampled sub-classes}
Using CIFAR-10 images, we create a dataset of ``Pets vs Deer'': Pets has 4200 samples and deer 4000. The class pets consists of two unlabeled sub-classes, cats (200) and dogs (4000). Since there are relatively few cat images, we expect each to carry more unique information. Indeed, Fig.~\ref{fig:cat-dog-deer} shows that this is the case, suggesting that the $\FSI$ can help detecting when an unlabeled sub-class of a larger class is under-sampled.
In this experiment we set $t=10000$ and $\eta=0.0001$.

\begin{figure}[ht]
    \centering
    \includegraphics[width=0.4\textwidth]{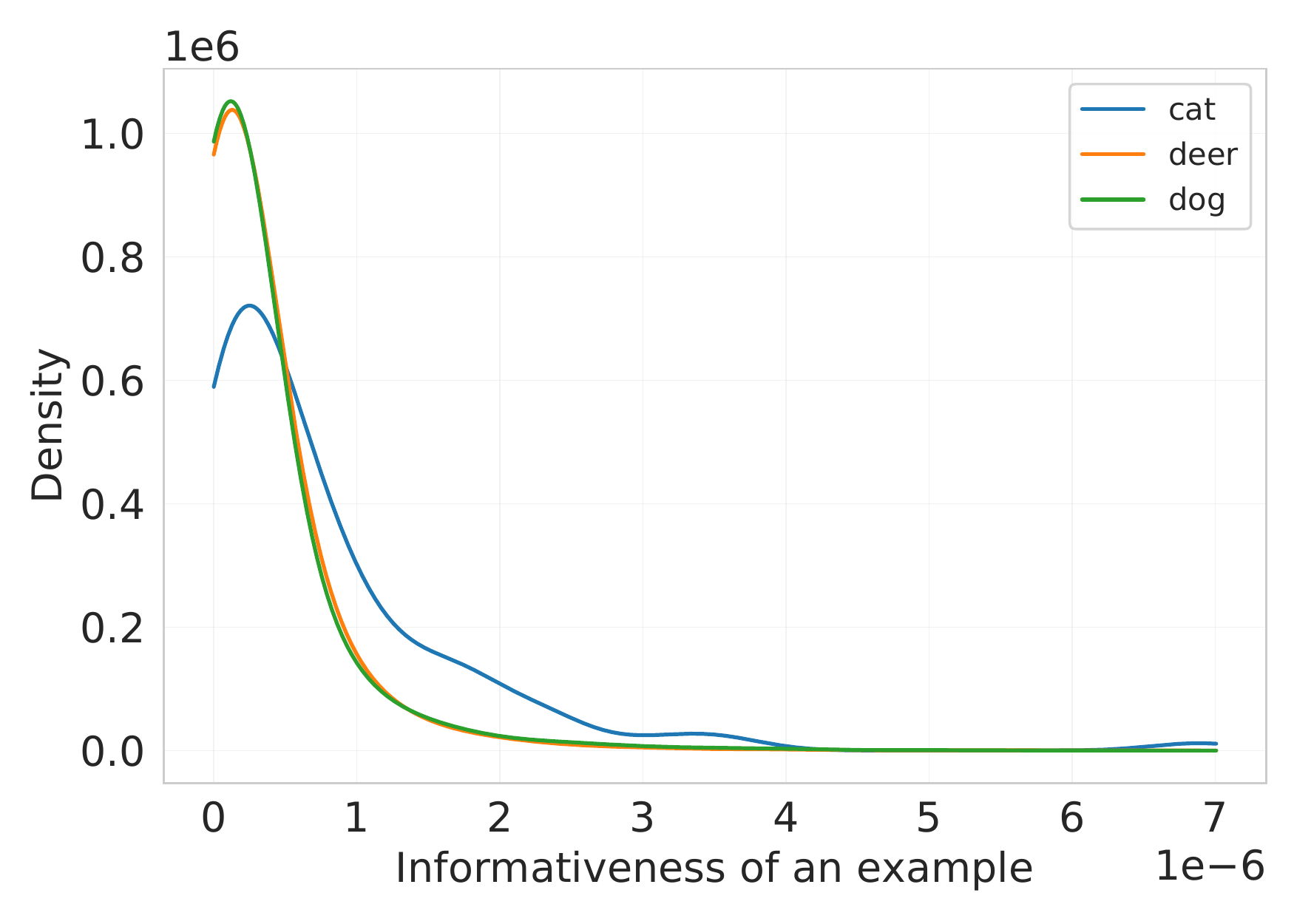}
    \caption{Histogram of the functional sample information of samples from the three subclasses of the Pets vs Deer. Since the sub-class ``cat'' is under-represented in the dataset, cat images tend to have on average more unique information than dog images, even if they belong to the same class.}
    \label{fig:cat-dog-deer}
\end{figure}

\subsection{Detecting adversarial examples}
\rebuttal{
\citet{adv-examples} shows that imperceptible perturbation to an image can fool a neural network into predicting the wrong label (adversarial examples).
\citet{adv-training} further shows that adding adversarial examples to the training dataset improves adversarial robustness and generalization (adversarial training), which suggests that adversarial examples may be informative for the training process.
To test how informative adversarial examples are with respect to normal ones, we consider the Kaggle Cats vs Dogs classification task, consisting of 1000 examples.
On this task we fine-tune a pretrained ResNet-18 and for 10\% of examples create successful adversarial examples using the FGSM method of~\citep{adv-training} with $\epsilon = 0.01$.
Then, for these 10\% of examples, we replace the original images with the corresponding adversarial ones.
With the resulting dataset, we consider a new pretrained ResNet-18 and compute $\FSI$ for all training examples, setting $t=1000$ and $\eta=0.001$.
The results reported in Fig.~\ref{fig:adversarial} confirm that adversarial examples are on average more informative than normal ones.
Furthermore, the results indicate that one can use $\FSI$ to successfully detect adversarial examples.
}

\begin{figure}
    \centering
    \includegraphics[width=0.4\textwidth]{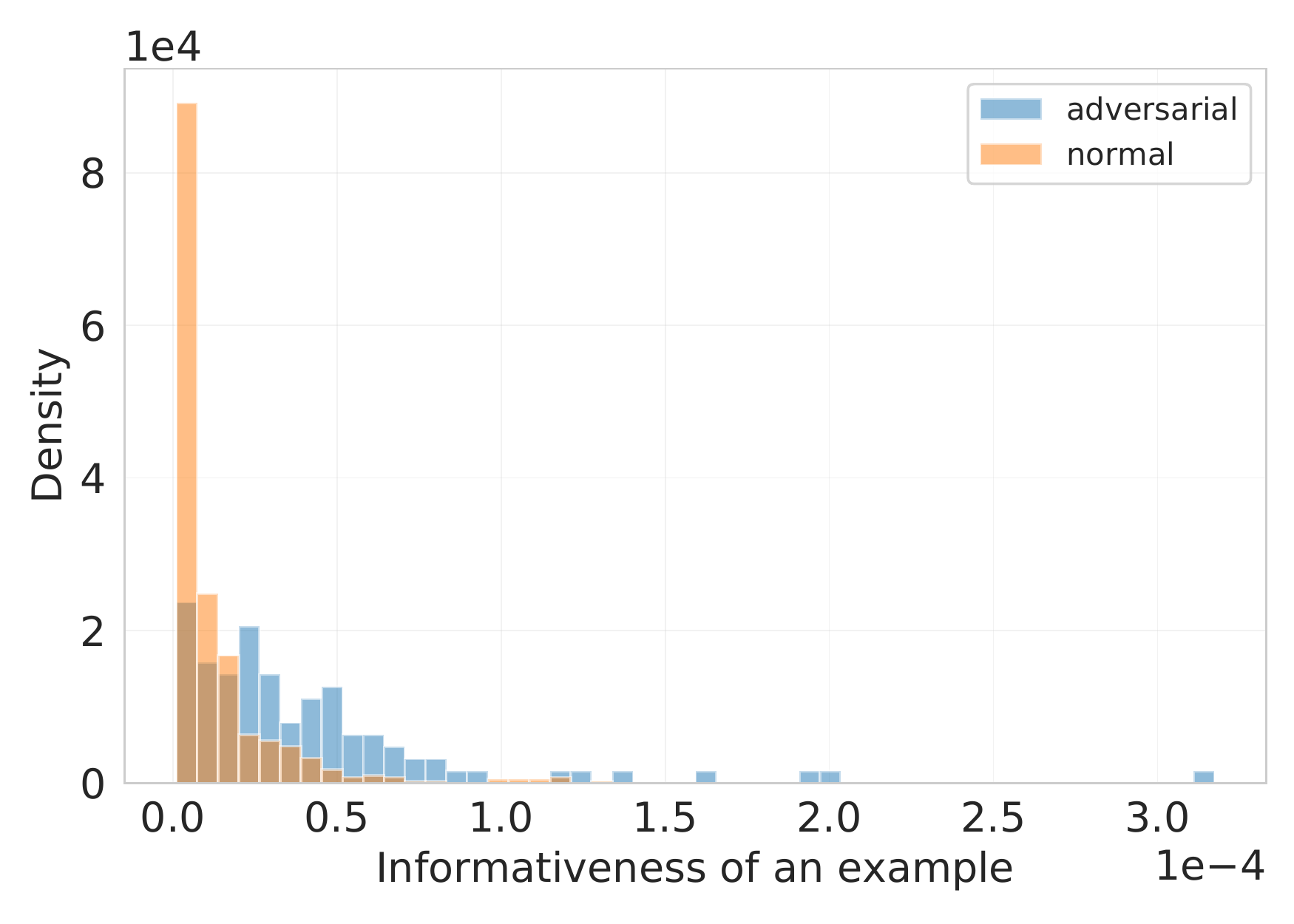}
    \caption{\rebuttal{Histogram of the functional sample information of samples of the Kaggle cats vs dogs classification task, where 10\% of examples are adversarial examples.}}
    \label{fig:adversarial}
\end{figure}

\subsection{Informative examples in sentiment analysis}\label{sec:sentiment}
\rebuttal{
So far we focused on image classification tasks. To test whether the proposed method works for other modalities, we consider a sentiment analysis task, consisting of 2000 samples from the IMDB movie reviews dataset~\citep{imdb}. We convert words to 300-dimensional vectors using the pretrained GloVe 6B word vectors~\citep{pennington2014glove}. Then each review is mapped to a vector by averaging all of its word vectors. On the resulting vector we apply a fully connected neural network with one hidden layer, consisting of 2000 ReLU units. We then compute $\FSI$ scores for all training examples, setting $t=10000$ and $\eta = 0.003$.
Likewise to the case of images, we find that least informative examples are typical and easy (often containing many sentiment-specific words), while the most informative examples are more nuanced and hard. Additionally, we find that there is -16\% correlation between the informativeness and length of review, possibly because averaging many word vectors makes long reviews similar to each other.
}

\subsection{Remaining details}
In the experiment of measuring which data source is more informative (Fig.~\ref{fig:mnist-svhn}) we set $t=20000$ and $\eta=0.0001$.
In the experiment of data summarization (Fig.~\ref{fig:mnist-data-sum}) we set $t=2000$ and $\eta = 0.001$.

\section{Proofs}\label{sec:appendix-proofs}

\rebuttal{
The proof of \cref{eq:unique-info-KL-stability} follows from the following more general lemma, applied to the conditional distribution $p_A(w|S_{-i}, z_i)$ and its marginal $r(x | S_{-i})$.

\begin{lemma}
\label{lemma:kl-divergence-bound}
Let $p(x|y)$ be a conditional probability density function, and let $p(x) = \int p(x|y) dP(y)$ denote its marginal distribution. Then, for any distribution $q(x)$ we have
\begin{align*}
    \KL{p(x|y)}{p(x)} &= \KL{p(x|y)}{q(x)} - \KL{p(x)}{q(x)} \\
    &\leq \KL{p(x|y)}{q(x)},
\end{align*}
where $\KL{p(x|y)}{q(x)} \triangleq \int \int \log \frac{p(x|y)}{q(x)} p(x|y) dx dP(y)$ denotes the conditional version of the KL divergence \cite[Section 2.5]{cover}.
\end{lemma}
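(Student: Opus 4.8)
The plan is to prove the identity by direct expansion of the conditional KL divergences, and then obtain the inequality as an immediate corollary from nonnegativity of the subtracted term. The key observation is that the marginal $p(x) = \int p(x|y)\,dP(y)$ is precisely the mixture over $y$, so when we take the outer expectation over $P(y)$ in the conditional KL definition, the density $p(x|y)$ in the numerator of one term can be traded for the marginal $p(x)$ through an integration over $y$.

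**First I would** write out the two conditional divergences on the right-hand side explicitly, using the definition supplied in the statement. We have
\begin{align*}
\KL{p(x|y)}{q(x)} &= \int\!\!\int \log\frac{p(x|y)}{q(x)}\, p(x|y)\,dx\,dP(y).
\end{align*}
I would then insert the telescoping factor $\frac{p(x)}{p(x)}$ inside the logarithm, splitting it as
\begin{align*}
\log\frac{p(x|y)}{q(x)} = \log\frac{p(x|y)}{p(x)} + \log\frac{p(x)}{q(x)},
\end{align*}
which decomposes $\KL{p(x|y)}{q(x)}$ into $\KL{p(x|y)}{p(x)}$ plus the cross term $\int\!\!\int \log\frac{p(x)}{q(x)}\,p(x|y)\,dx\,dP(y)$.

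**The crux of the argument** is handling that cross term. Because $\log\frac{p(x)}{q(x)}$ does not depend on $y$, I would swap the order of integration (justified by Fubini, assuming the integrands are well-behaved enough for the divergences to be defined) and integrate out $y$ first: $\int p(x|y)\,dP(y) = p(x)$ by the very definition of the marginal. This collapses the cross term to $\int \log\frac{p(x)}{q(x)}\,p(x)\,dx = \KL{p(x)}{q(x)}$. Rearranging yields the claimed identity $\KL{p(x|y)}{p(x)} = \KL{p(x|y)}{q(x)} - \KL{p(x)}{q(x)}$.

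**The inequality then follows for free** from the nonnegativity of KL divergence: since $\KL{p(x)}{q(x)} \ge 0$ by Gibbs' inequality, subtracting it can only decrease the value, so $\KL{p(x|y)}{p(x)} \le \KL{p(x|y)}{q(x)}$. **The main obstacle** is not conceptual but technical: one must ensure the integrals are finite and the interchange of integration order in the cross term is legitimate, so the telescoping manipulation is valid term by term. In particular, if $\KL{p(x)}{q(x)} = +\infty$ the identity must be read with care, but whenever the right-hand side $\KL{p(x|y)}{q(x)}$ is finite the decomposition and hence the bound go through cleanly.
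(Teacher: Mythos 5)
Your proposal is correct and is essentially the paper's own proof run in the opposite direction: you decompose $\log\frac{p(x|y)}{q(x)}$ into $\log\frac{p(x|y)}{p(x)} + \log\frac{p(x)}{q(x)}$, while the paper combines the two divergences into one, but both arguments hinge on the same two steps --- substituting $p(x) = \int p(x|y)\,dP(y)$ and exchanging the order of integration --- and both derive the inequality from nonnegativity of $\KL{p(x)}{q(x)}$. Your added remark on finiteness and Fubini is a fair technical caveat the paper leaves implicit, but it does not change the argument.
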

\begin{proof}
The last inequality follows from the fact that the KL-divergence is always non-negative. We now prove the first equality:
\begin{align*}
     &\KL{p(x|y)}{q(x)} - \KL{p(x)}{q(x)}\\
     &\hspace{2em}=\int \int \log \frac{p(x|y)}{q(x)} p(x|y) dx dP(y) -  \int \log \frac{p(x)}{q(x)} p(x) dx \\
     &\hspace{2em}\stackrel{(a)}{=} \int \int \log \frac{p(x|y)}{q(x)} p(x|y) dx dP(y) -  \int  \log \frac{p(x)}{q(x)} \Big(\int p(x|y) dP(y) \Big) dx \\
     &\hspace{2em}\stackrel{(b)}{=} \int \int \log \frac{p(x|y)}{q(x)} p(x|y) dx dP(y) -  \int \int  \log \frac{p(x)}{q(x)} p(x|y)  dx dP(y)\\
     &\hspace{2em}= \int \int \Big[ \log \frac{p(x|y)}{q(x)} - \log \frac{p(x)}{q(x)} \Big]  p(x|y)  dx dP(y)\\
     &\hspace{2em}= \int \int \log \frac{p(x|y)}{p(x)} p(x|y) dx dP(y)\\
     &\hspace{2em}=\, \KL{p(x|y)}{p(x)},
\end{align*}
where in (a) we use that by definition $p(x) = \int p(x|y)dP(y)$ and in (b) we exchange the order of integration.
\end{proof}
}

Propositions \ref{prop:smooth-SI-deterministic} and \ref{prop:smooth-functional-SI-deterministic} are trivial, since they just assert that KL divergence between two Gaussian distributions with equal covariance matrices $\Sigma$ and with means $\mu_1$ and $\mu_2$ is equal to $\frac{1}{2} (\mu_1 - \mu_2)^T \Sigma^{-1} (\mu_1 - \mu_2)$.
We present the proof of Proposition \ref{prop:equality-of-non-smooth-and-smooth-SIs} here.

\begin{prop}[Prop.~\ref{prop:equality-of-non-smooth-and-smooth-SIs} restated]
Let the loss function be regularized MSE, $w^*$ be the global minimum of it, and algorithms $A_\text{SGD}$ and $A_\text{ERM}$ be defined as in the main text. Assuming $\Lambda(w)$ is approximately constant around $w^*$ and SGD's steady-state covariance remains constant after removing an example, we have
\begin{align}
\USI(z_i, A_\text{SGD}) = \USI_\Sigma(z_i, A_\text{ERM}) = \frac{1}{2} (w^* - w^*_{-i})^T \Sigma^{-1} (w^* - w^*_{-i}),
\end{align}
where $\Sigma$ is the solution of
\begin{align}
    H \Sigma + \Sigma H^T &= \frac{\eta}{b}\Lambda(w^*), 
\end{align}
with $H = (\nabla_w f_0(X) \nabla_w f_0(X)^T + \lambda I)$ being the Hessian of the loss function.
\end{prop}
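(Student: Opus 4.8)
The plan is to show that both $\USI(z_i, A_\text{SGD})$ and $\USI_\Sigma(z_i, A_\text{ERM})$ reduce to the same Gaussian KL divergence, by identifying the steady-state distribution of the SGD diffusion in eq.~(\ref{eq:SGD-SDE-plain}) as a Gaussian centered at $w^*$ whose covariance solves the Lyapunov equation~(\ref{eq:lyapunov}).

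First I would exploit the fact that for a regularized linearized network the total loss $\Loss(w)$ is an \emph{exact} quadratic in $w$, so its gradient is exactly $\nabla_w \Loss(w) = H(w - w^*)$ with $H = \nabla_w f_0(X)\nabla_w f_0(X)^T + \lambda I$. Substituting this into eq.~(\ref{eq:SGD-SDE-plain}) and invoking the assumption that $\Lambda(w)\approx\Lambda(w^*)$ is constant near the minimum, the dynamics of the centered process $u_t = w_t - w^*$ become a multivariate Ornstein--Uhlenbeck process $du_t = -\eta H u_t\,dt + \eta\sqrt{\tfrac{1}{b}\Lambda(w^*)}\,dn(t)$. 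Because $\lambda>0$ makes $H$ positive definite, this process admits a unique Gaussian stationary distribution $\mathcal{N}(w^*, \Sigma)$.

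The key step is deriving the stationary covariance. For a linear SDE $du = -A u\,dt + B\,dn$, the stationary covariance solves the continuous Lyapunov equation $A\Sigma + \Sigma A^T = BB^T$; this can be obtained either by applying It\^{o}'s formula to $u_t u_t^T$ and setting $\tfrac{d}{dt}\E[u_t u_t^T]=0$ at stationarity, or from the standard theory of linear diffusions. With $A = \eta H$ and $BB^T = \eta^2 \tfrac{1}{b}\Lambda(w^*)$, dividing through by $\eta$ yields exactly eq.~(\ref{eq:lyapunov}), so $p_{A_\text{SGD}}(w\mid S) = \mathcal{N}(w^*,\Sigma)$. Repeating the argument on $S_{-i}$ gives $p_{A_\text{SGD}}(w\mid S_{-i}) = \mathcal{N}(w^*_{-i}, \Sigma_{-i})$, and invoking the second assumption that the steady-state covariance is unchanged after removing a sample sets $\Sigma_{-i} = \Sigma$.

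Finally I would assemble the pieces. Since $\USI(z_i, A_\text{SGD})$ is the KL divergence between two Gaussians $\mathcal{N}(w^*,\Sigma)$ and $\mathcal{N}(w^*_{-i},\Sigma)$ sharing a common covariance, the general Gaussian KL formula collapses (the log-determinant and trace terms cancel) to $\tfrac{1}{2}(w^* - w^*_{-i})^T\Sigma^{-1}(w^* - w^*_{-i})$, giving the left equality. The right equality is immediate from Proposition~\ref{prop:smooth-SI-deterministic} applied to the deterministic algorithm $A_\text{ERM}$, whose outputs on $S$ and $S_{-i}$ are $w^*$ and $w^*_{-i}$, smoothed by $\Sigma$. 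The main obstacle is the covariance derivation: establishing the Lyapunov equation cleanly requires the constant-$\Lambda$ linearization of the diffusion together with the positive-definiteness of $H$ to guarantee a unique stationary solution, after which the remaining algebra is routine.
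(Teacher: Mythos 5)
Your proposal is correct and takes essentially the same route as the paper: establish that the SGD steady state is $\mathcal{N}(w^*,\Sigma)$ with $\Sigma$ solving the Lyapunov equation $H\Sigma + \Sigma H^T = \tfrac{\eta}{b}\Lambda(w^*)$, use the assumption $\Sigma_{-i}=\Sigma$ so the equal-covariance Gaussian KL collapses to $\tfrac{1}{2}(w^*-w^*_{-i})^T\Sigma^{-1}(w^*-w^*_{-i})$, and identify this with $\USI_\Sigma(z_i, A_\text{ERM})$ via Proposition~\ref{prop:smooth-SI-deterministic}. The only minor difference is in how the stationary covariance is justified --- you derive it forward via the Ornstein--Uhlenbeck moment equation (using the exact quadratic gradient $\nabla_w\Loss(w)=H(w-w^*)$), whereas the paper verifies the Gaussian ansatz against the stationary Fokker--Planck equation in Sec.~\ref{sec:sgd-final-dist} --- two standard, equivalent arguments.
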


\begin{proof}
Assuming $\Lambda(w)$ is approximately constant around $w^*$, the steady-state distributions of (\ref{eq:SGD-SDE-plain}) is a Gaussian distribution with mean $w^*$ and covariance $\Sigma$ such that:
\begin{equation}
H \Sigma + \Sigma H^T = \frac{\eta}{b}\Lambda(w^*),
\end{equation}
where $H = (\nabla_w f_0(X) \nabla_w f_0(X)^T + \lambda I)$ is the Hessian of the loss function~\citep{mandt2017stochastic}.
This can be verified by checking that the distribution $\mathcal{N}(\cdot; w^*, \Sigma)$ satisfies the Fokker-Planck equation (see Sec.~\ref{sec:sgd-final-dist}).
Having $\rebuttal{p_{A_\text{SGD}}}(w \mid S) = \mathcal{N}(w ; w^*, \Sigma)$ and $\rebuttal{p_{A_\text{SGD}}}(w \mid S_{-i}) = \mathcal{N}(w; w^*_{-i}, \Sigma_{-i})$, we have that
\begin{align}
\USI(z_i, A_\text{SGD}) = \frac{1}{2} \left((w^* - w^*_{-i})^T\Sigma_{-i}^{-1}(w^* - w^*_{-i}) + \text{tr}(\Sigma_{-i}^{-1}\Sigma) + \log |\Sigma_{-i} \Sigma^{-1}| - d\right).
\label{eq:appendix-KL-stability-KL-between-Gaussians}
\end{align}
By the assumption that SGD steady-state covariance stays constant after removing an example, i.e. $\Sigma_{-i} = \Sigma$, equation (\ref{eq:appendix-KL-stability-KL-between-Gaussians}) simplifies to:
\begin{align*}
\KL{\rebuttal{p_{A_\text{SGD}}}(w \mid S)}{\rebuttal{p_{A_\text{SGD}}}(w \mid S_{-i})} &= \frac{1}{2} (w^* - w^*_{-i})^T \Sigma^{-1}(w^* - w^*_{-i}). \numberthis\label{eq:appendix-weights-stability-simplified}
\end{align*}
By the definition of the $A_\text{ERM}$ algorithm and smooth sample information, this is exactly equal to $\USI_\Sigma(z_i, A_\text{ERM})$.
\end{proof}

\section{SGD noise covariance}\label{sec:sgd-noise-cov}
Assume we have $n$ examples and the batch size is $b$. Let $g_i \triangleq \nabla_w \Loss_i(w)$, $g\triangleq \frac{1}{n}\sum_{i=1}^n g_i$, and $\tilde{g}\triangleq\frac{1}{b}\sum_{i=1}^b g_{k_i}$, where $k_i$ are sampled independently from $\{1,\ldots,n\}$ uniformly at.
Then
\begin{align*}
\text{Cov}[\tilde{g},\tilde{g}] &= \mathbb{E}\left[\left(\frac{1}{b}\sum_{i=1}^b g_{k_i} - g\right)\left(\frac{1}{b}\sum_{i=1}^b g_{k_i} - g\right)^T\right]\\
&= \frac{1}{b^2} \sum_{i=1}^b \mathbb{E}\left[(g_{k_i} - g)(g_{k_i} - g)^T\right]\\
&= \frac{1}{bn} \sum_{i=1}^n (g_{i} - g)(g_{i} - g)^T\\
&= \frac{1}{b} \left(\frac{1}{n}\left(\sum_{i=1}^n g_i g_i^T\right) -  gg^T\right).
\end{align*}
We denote the per-sample covariance, $b \cdot\text{Cov}\left[\tilde{g}, \tilde{g}\right]$ with $\Lambda(w)$:
\begin{equation*}
   \Lambda(w)=\frac{1}{n}\left(\sum_{i=1}^n g_i g_i^T\right) -  gg^T = \frac{1}{n}G G^T - gg^T,
\end{equation*}
where $G \in \mathbb{R}^{d \times n}$ has $g_i$'s as its columns.
We can see that whenever the number of samples times number of outputs is less than number of parameters ($n k < d$), then $\Lambda(w)$ will be rank deficient.
Also, note that if we add weight decay to the total loss then covariance $\Lambda(w)$ will not change, as all gradients will be shifted by the same vector.

\section{Steady-state covariance of SGD}\label{sec:sgd-final-dist}
In this section we verify that the normal distribution $\mathcal{N}(\cdot; w^*, \Sigma)$, with $w^*$ being the global minimum of the regularization MSE loss and covariance matrix $\Sigma$ satisfying the continuous Lyapunov equation $\Sigma H + H \Sigma = \frac{\eta}{b}\Lambda(w^*)$, is the steady-state distribution of the stochastic differential equation of eq. (\ref{eq:SGD-SDE-plain}).
We assume that (a) $\Lambda(w)$ is constant in a small neighborhood of $w^*$ and (b) the steady-state distribution is unique.
We start with the Fokker-Planck equation:
\begin{equation*}
\frac{\partial p(w, t)}{\partial t} = \sum_{i=1}^n \frac{\partial}{\partial w_i} \left[ \eta \nabla_{w_i} \Loss(w)p(w,t)\right] + \frac{\eta^2}{2b}\sum_{i=1}^n\sum_{j=1}^n \frac{\partial^2}{\partial w_i \partial w_j}\left[ \Lambda(w)_{i,j} p(w,t) \right].
\end{equation*}
If $p(w) = \mathcal{N}(w ; w^*, \Sigma) = \frac{1}{Z} \exp\left\{-\frac{1}{2} (w - w^*)^T \Sigma^{-1} (w - w^*)\right\}$ is the steady-state distribution, then the Fokker-Planck becomes:
\begin{equation}
    0 = \sum_{i=1}^d \frac{\partial}{\partial w_i} \left[ \eta \nabla_{w_i} \Loss(w)p(w)\right] + \frac{\eta^2}{2b}\sum_{i=1}^d\sum_{j=1}^d \frac{\partial^2}{\partial w_i \partial w_j}\left[ \Lambda(w)_{i,j} p(w) \right].
\label{eq:fokker-planck-final}
\end{equation}
In the case of MSE loss:
\begin{align*}
&\nabla_{w} \Loss(w) = \sum_{k=1}^n \nabla f_0(x_k) (f(x_k) - y_k) + \lambda w = \nabla f_0(X) (f(X) - Y) + \lambda w,\\
&\nabla^2_{w} \Loss(w) = \sum_{k=1}^n \nabla f_0(x_k) + \lambda I_d \nabla f_0(x_k)^T = \nabla f_0(X) \nabla f_0(X)^T + \lambda I.
\end{align*}
Additionally, for $p(w)$ the following two statements hold:
\begin{align*}
&\frac{\partial}{\partial w_i} p(w) = -p(w) \Sigma^{-1}_i(w-w^*),\\
&\frac{\partial^2}{\partial w_i \partial w_j} p(w) = -p(w) \Sigma^{-1}_{i,j} + p(w) \Sigma_j^{-1}(w-w^*) \Sigma_i^{-1}(w-w^*),
\end{align*}
where $\Sigma^{-1}_i$ is the $i$-th row of $\Sigma^{-1}$.
Let's compute the first term of (\ref{eq:fokker-planck-final}):
\begin{align*}
\sum_{i=1}^d\frac{\partial}{\partial w_i} \left[ \nabla_{w_i}\Loss(w) p(w) \right] &= \sum_{i=1}^d \left[p(w) \left( \nabla f_0(X)_i \nabla f_0(X)_i^T + \lambda w_i\right) - \nabla_{w_i}\Loss(w) \cdot p(w) \Sigma_i^{-1}(w-w^*)\right]\\
&\hspace{-8em}= p(w) \text{tr}\left(\nabla f_0(X) \nabla f_0(X)^T + \lambda I\right) - p(w) \sum_{i=1}^d \left(\nabla f_0(X)_i
(f(X) - Y) + \lambda w_i\right) \Sigma_i^{-1} (w-w^*)\\
&\hspace{-8em}= p(w) \text{tr}(H) - p(w) \left((f(X) - Y)^T \nabla f_0(X)^T + \lambda w^T\right) \Sigma^{-1} (w-w^*)\numberthis\label{eq:fokker-planck-first-term}.
\end{align*}
As  $w^*$ is a critical point of $\Loss(w)$, we have that $\nabla f_0(X) (f_{w^*}(X) - Y) + \lambda w^* = 0$.
Therefore, we can subtract $p(w)\left((f_{w^*}(X) - Y)^T \nabla f_0(X)^T + \lambda (w^*)^T\right) \Sigma^{-1}(w-w^*)$ from (\ref{eq:fokker-planck-first-term}):
\begin{align*}
\sum_{i=1}^d\frac{\partial}{\partial w_i} \left[ \nabla_{w_i}\Loss(w) p(w) \right] &=\\
&\hspace{-8em}=p(w) \text{tr}(H) - p(w) \left((f(X) - f_{w^*}(X))^T \nabla f_0(X)^T  + \lambda (w-w^*)^T\right)\Sigma^{-1} (w-w^*)\\
&\hspace{-8em}=p(w) \text{tr}(H) - p(w) (w-w^*)^T  \left(\nabla f_0(X) \nabla f_0(X)^T + \lambda I\right)\Sigma^{-1} (w-w^*)\\
&\hspace{-8em}=p(w) \text{tr}(H) - p(w) (w-w^*)^T  H \Sigma^{-1} (w-w^*).\numberthis\label{eq:fokker-planck-first-summand}
\end{align*}

\subsection*{Isotropic case: $\Lambda(w) = \sigma^2 I_d$}
In the case when $\Lambda(w) = \sigma^2 I_d$, we have
\begin{equation*}
\sum_{i,j} \frac{\partial^2}{\partial w_i \partial w_j}\left[ \Lambda(w)_{i,j} p(w) \right] = \sigma^2 \text{tr}(\nabla^2_w p(w)) = -\sigma^2 p(w) \text{tr}(\Sigma^{-1}) + \sigma^2 p(w) \lVert \Sigma^{-1}(w-w^*)\rVert_2^2.
\end{equation*}
Putting everything together in the Fokker-Planck we get:
\begin{align*}
&\eta \left(p(w) \text{tr}(H) - p(w) (w-w^*)^T  H \Sigma^{-1} (w-w^*))\right)\\
&\quad+ \frac{\eta^2}{2b} \left(-\sigma^2 p(w) \text{tr}(\Sigma^{-1}) + \sigma^2 p(w) \lVert \Sigma^{-1}(w-w^*)\rVert_2^2\right) = 0.
\end{align*}
It is easy to verify that $\Sigma^{-1} = \frac{2b}{\eta \sigma^2} H$ is a valid inverse covariance matrix and satisfies the equation above.
Hence, it is the unique steady-state distribution of the stochastic differential equation.
The result confirms that variance is high when batch size is low or learning rate is large. Additionally, the variance is low along directions of low curvature.

\subsection*{Non-isotropic case}
We assume $\Lambda(w)$ is constant around $w^*$ and is equal to $\Lambda$.
This assumption is acceptable to a degree, because SGD converges to a relatively small neighborhood, in which we can assume $\Lambda(w)$ to not change much.
With this assumption,
\begin{align*}
\sum_{i,j} \frac{\partial^2}{\partial w_i \partial w_j}\left[ \Lambda_{i,j} p(w) \right] &= \sum_{i,j} \Lambda_{i,j} \left[ -p(w) \Sigma^{-1}_{i,j} + p(w) \Sigma_j^{-1}(w-w^*) \Sigma_i^{-1}(w-w^*) \right]\\
&\hspace{-4em}= -p(w)\text{tr}(\Sigma^{-1}\Lambda) + p(w) \sum_{i,j} \Lambda_{i,j} (\Sigma^{-1} (w-w^*) (w-w^*)^T \Sigma^{-1})_{i,j}\\
&\hspace{-4em}= -p(w)\text{tr}(\Sigma^{-1}\Lambda) + p(w) \text{tr}(\Sigma^{-1} (w-w^*) (w-w^*)^T \Sigma^{-1} \Lambda)\\
&\hspace{-4em}=-p(w)\text{tr}(\Sigma^{-1}\Lambda) + p(w) (w-w^*)^T \Sigma^{-1}\Lambda \Sigma^{-1} (w-w^*))\numberthis\label{eq:fp-second-summand-constant-sigma}.
\end{align*}
It is easy to verify that if $\Sigma H + H \Sigma = \frac{\eta}{b} \Lambda$, then terms in equations (\ref{eq:fokker-planck-first-summand}) and (\ref{eq:fp-second-summand-constant-sigma}) will be negatives of each other up to a constant $\frac{\eta}{2b}$, implying that $p(w)$ satisfies the Fokker-Planck equation.
Note that $\Sigma^{-1} = \frac{2b}{\eta} H\Lambda^{-1}$ also satisfies the Fokker-Planck, but will not be positive definite unless $H$ and $\Lambda$ commute.

\section{Fast update of NTK inverse after data removal}
\label{sec:inverse-update}
For computing weights or predictions of a linearized network at some time $t$, we need to compute the inverse of the NTK matrix.
To compute the informativeness scores, we need to do this inversion $n$ time, each time with one data point excluded.
In this section, we describe how to update the inverse of NTK matrix after removing one example in $O(n^2k^3)$ time, instead of doing the straightforward $O(n^3k^3)$ computation.
Without loss of generality let's assume we remove last $r$ rows and corresponding columns from the NTK matrix.
We can represent the NTK matrix as a block matrix:
\begin{equation*}
    \Theta_0 = \left[\begin{matrix}A_{11} & A_{12} \\ A_{21} & A_{22}\end{matrix}\right].
\end{equation*}
The goal is to compute $A_{11}^{-1}$ from $\Theta_0^{-1}$. We start with the block matrix inverse formula:
\begin{equation}
    \Theta_0^{-1} = \left[\begin{matrix}A_{11} & A_{12} \\ A_{21} & A_{22}\end{matrix}\right]^{-1} = \left[\begin{matrix}F_{11}^{-1} & -F_{11}^{-1} A_{12}A_{22}^{-1} \\  -A_{22}^{-1} A_{21} F_{11}^{-1} & F_{22}^{-1}\end{matrix}\right],
    \label{eq:block-matrix-inv}
\end{equation}
where
\begin{align}
F_{11} &= A_{11} - A_{12} A_{22}^{-1} A_{21},\label{eq:f11-def}\\
F_{22} &= A_{22} - A_{21} A_{11}^{-1} A_{12}.
\end{align}
From (\ref{eq:f11-def}) we have $A_{11} = F_{11} +  A_{12} A_{22}^{-1} A_{21}$.
Applying the Woodbury matrix identity on this we get:
\begin{equation}
A_{11}^{-1} = F_{11}^{-1} - F_{11}^{-1} A_{12} (A_{22} + A_{21} F_{11}^{-1} A_{12})^{-1} A_{21} F_{11}^{-1}\label{eq:ntk_inv_update}.
\end{equation}
This Eq. (\ref{eq:ntk_inv_update}) gives the recipe for computing $A_{11}^{-1}$.
Note that $F_{11}^{-1}$ can be read from $\Theta_0^{-1}$ using (\ref{eq:block-matrix-inv}), $A_{12}, A_{21},$ and $A_{22}$ can be read from $\Theta$.
Finally, the complexity of computing $A_{11}^{-1}$ using (\ref{eq:ntk_inv_update}) is $O(n^2k^3)$ if we remove one example.

\section{Adding weight decay to linearized neural network training}\label{sec:wd-linearized}
Let us consider the loss function $\Loss(w) = \sum_{i=1}^n \Loss_i(w) + \frac{\lambda}{2} \lVert w-w_0 \rVert_2^2$.
In this case continuous-time gradient descent is described by the following ODE:
\begin{align}
\dot w(t) &= -\eta \nabla_w f_0(X) (f_t(X) - Y) - \eta \lambda (w(t)-w_0)\\
&= -\eta \nabla_w f_0(X) (\nabla_w f_0(X)^T (w(t) - w_0) + f_0(X) - Y) - \eta \lambda (w(t) - w_0)\\
&= \underbrace{-\eta (\nabla_w f_0(X) \nabla_w f_0(X)^T + \lambda I)}_A (w(t)-w_0) + \underbrace{\eta \nabla_w f_0(X) \left(-f_0(X) + Y \right)}_b.
\end{align}
Let $\omega(t) \triangleq w(t) - w_0$, then we have
\begin{align}
\dot \omega(t) = A \omega(t) + b.
\end{align}
Since all eigenvalues of $A$ are negative, this ODE is stable and has steady-state 
\begin{align}
\omega^* &= -A^{-1}b\\
&= (\nabla_w f_0(X) \nabla_w f_0(X)^T + \lambda I)^{-1} \nabla_w f_0(X) (Y - f_0(X)).
\end{align}
The solution $\omega(t)$ is given by:
\begin{align}
\omega(t) &= \omega^* + e^{A t} (\omega_0 - \omega^*)\\
&= (I - e^{A t}) \omega^*.
\label{eq:weight-decay-omega-sol}
\end{align}
Let $\Theta_w \triangleq \nabla_w f_0(X) \nabla_w f_0(X)^T$ and $\Theta_0 \triangleq \nabla_w f_0(X)^T \nabla_w f_0(X)$.
If the SVD of $\nabla_w f_0(X)$ is $U D V^T$, then $\Theta_w = U D U^T$ and $\Theta_0 = V D V^T$.
Additionally, we can extend the columns of $U$ to full basis of $\mathbb{R}^d$ (denoted with $\tilde{U}$) and append zeros to $D$ (denoted with $\tilde{D}$) to write down the eigen decomposition $\Theta_w = \tilde{U} \tilde{D} \tilde{U}^T$.
With this, we have $(\Theta_w + \lambda I)^{-1} = \tilde{U} (\tilde{D} + \lambda I)^{-1} \tilde{U}^T$.
Continuing (\ref{eq:weight-decay-omega-sol}) we have
\begin{align}
\omega(t) &= (I - e^{A t}) \omega^*\\
&= (I - e^{A t}) (\Theta_w + \lambda I)^{-1} \nabla_w f_0(X) (Y - f_0(X))\\
&= (I - e^{A t}) \tilde{U}(\tilde{D} + \lambda I)^{-1} \tilde{U}^T U D V^T (Y - f_0(X))\\
&= \tilde{U}(I - e^{-\eta t (\tilde{D} + \lambda I)})\tilde{U}^T \tilde{U}(\tilde{D} + \lambda I)^{-1} \tilde{U}^T U D V^T (Y - f_0(X))\\
&= \tilde{U}(I - e^{-\eta t (\tilde{D} + \lambda I)})(\tilde{D} + \lambda I)^{-1} I_{d \times nk} D V^T (Y - f_0(X))\\
&= \tilde{U}(I - e^{-\eta t (\tilde{D} + \lambda I)}) \tilde{Z} V^T (Y - f_0(X)),
\end{align}
where $\tilde{Z} = \left[\begin{matrix} (D + \lambda I)^{-1} D\\ 0 \end{matrix}\right] \in \mathbb{R}^{d \times nk}$. Denoting $Z \triangleq (D + \lambda I)^{-1} D$ and continuing,
\begin{align}
\omega(t) &= \tilde{U}(I - e^{-\eta t (\tilde{D} + \lambda I)}) \tilde{Z} V^T (Y - f_0(X))\\
&= U (I - e^{-\eta t (D + \lambda I)}) Z V^T (Y - f_0(X))\\
&= U Z (I - e^{-\eta t (D + \lambda I)}) V^T (Y - f_0(X))\\
&= U Z V^T V (I - e^{-\eta t (D + \lambda I)}) V^T (Y - f_0(X))\\
&= U Z V^T (I - e^{-\eta t (\Theta_0 + \lambda I)}) (Y - f_0(X))\\
&= \nabla_w f_0(X) (\Theta_0 + \lambda I)^{-1} (I - e^{-\eta t (\Theta_0 + \lambda I)}) (Y - f_0(X)).\label{eq:weight-dacey-weights-final-containing}
\end{align}

\paragraph{Solving for outputs.} Having $w(t)$ derived, we can write down dynamics of $f_t(x)$ for any $x$:
\begin{align}
f_t(x) &= f_0(x) + \nabla_w f_0(x)^T \omega(t)\\
&= f_0(x) + \nabla_w f_0(x)^T \nabla_w f_0(X) (\Theta_0 + \lambda I)^{-1} (I - e^{-\eta t (\Theta_0 + \lambda I)}) (Y - f_0(X))\\
&= f_0(x) + \Theta_0(x, X) (\Theta_0 + \lambda I)^{-1} (I - e^{-\eta t (\Theta_0 + \lambda I)}) (Y - f_0(X)).
\end{align}

\end{document}